\newtheorem{theorem}{Theorem}
\newtheorem{lemma}{Lemma}
\newtheorem{definition}{Definition}
\newcommand*{\dif}{\mathop{}\!\mathrm{d}}
\newcommand*{\tr}{\mathop{}\!\mathrm{tr}}
\newcommand{\ff}[1] {\textcolor{black}{#1}}
\begin{document}
	
\title{\huge\bf Neural Network Gaussian Processes by Increasing Depth}

\author{Shao-Qun Zhang\textsuperscript{\rm 1}, Fei Wang\textsuperscript{\rm 2}, \textit{Senior Member, IEEE}, Feng-Lei Fan\textsuperscript{\rm 3*}, \textit{Member, IEEE}
	\thanks{*Dr. F.-L. Fan (hitfanfenglei@gmail.com) is the corresponding author.}
	\thanks{\textsuperscript{\rm 1}Shao-Qun Zhang is with National Key Laboratory for Novel Software Technology, Nanjing University, Nanjing 210023, China. Email: zhangsq@lamda.nju.edu.cn}
	\thanks{\textsuperscript{\rm 2}Dr. Fei Wang is with Department of Population Health Sciences, Weill Cornell Medicine, Cornell University, New York, NY 10065, USA.}
	\thanks{\textsuperscript{\rm 3}Feng-Lei Fan was with Department of Biomedical Engineering, Rensselaer Polytechnic Institute, Troy, NY 12180, USA. Now he is a postdoctoral associate in Department of Population Health Sciences, Weill Cornell Medicine, Cornell University, New York, NY 10065, USA.}
}

\markboth{Journal of \LaTeX\ Class Files,~Vol.~14, No.~8, August~2021}%
{Shell \MakeLowercase{\textit{et al.}}: A Sample Article Using IEEEtran.cls for IEEE Journals}

	
\maketitle
	
\begin{abstract}
	Recent years have witnessed an increasing interest in the correspondence between infinitely wide networks and Gaussian processes. Despite the effectiveness and elegance of the current neural network Gaussian process theory, to the best of our knowledge, all the neural network Gaussian processes are essentially induced by increasing width. However, in the era of deep learning, what concerns us more regarding a neural network is its depth as well as how depth impacts the behaviors of a network. Inspired by a width-depth symmetry consideration, we use a shortcut network to show that increasing the depth of a neural network can also give rise to a Gaussian process, which is a valuable addition to the existing theory and contributes to revealing the true picture of deep learning. Beyond the proposed Gaussian process by depth, we theoretically characterize its uniform tightness property and the smallest eigenvalue of the Gaussian process kernel. These characterizations can not only enhance our understanding of the proposed depth-induced Gaussian process but also pave the way for future applications. Lastly, we examine the performance of the proposed Gaussian process by regression experiments on two benchmark data sets.
\end{abstract}
	
\begin{IEEEkeywords}
	Deep neural networks, neural network Gaussian processes, generalized Central Limit Theorem, weak dependence, uniform tightness, smallest eigenvalue
\end{IEEEkeywords}
	
\section{INTRODUCTION} \label{sec:introduction}
Currently, kernel methods and deep neural networks are two of the most remarkable machine learning methodologies. Recent years have witnessed lots of works on their connection. Lee  \textit{et al.} \cite{lee2017deep} pointed out that randomly initializing parameters of an infinitely wide network gives rise to a Gaussian process, which is referred to as \textit{neural network Gaussian processes} (NNGP). Due to the attraction of this idea, the studies of NNGP have been scaled into more types of networks, such as attention-based models \cite{hron2020infinite} and recurrent networks \cite{yang2019:rnn}.

A Gaussian process is a classical non-parametric model. The equivalence between an infinitely wide fully-connected network and a Gaussian process has been established in \cite{neal1996:priors, lee2017deep}. Given a fully-connected multi-layer network whose parameters are i.i.d. randomly initialized, the output of each neuron is an aggregation of neurons in the preceding layer whose outputs are also i.i.d. When the network width goes infinitely large, according to the Central Limit Theorem \cite{fischer2010history}, the output of each neuron conforms to the Gaussian distribution. As a result, the output function expressed by the network is essentially a Gaussian process. The correspondence between neural networks and Gaussian processes allows the exact Bayesian inference using the neural network \cite{lee2017deep}.  


Despite the achievements of the current NNGP theory, it has an important limit that is not addressed satisfactorily. So far, the neural network Gaussian process is essentially induced by increasing width, regardless of how many layers are stacked in a network. But in the era of deep learning, what concerns us more regarding deep learning is its depth and how the depth affects the behaviors of a neural network, since the depth is the major element accounting for the power of deep learning. Although that the current NNGP theory is beautiful and elegant in its form, unfortunately, it can not accommodate our concern adequately. Therefore, it is highly necessary to expand the scope of the existing theory to include the depth issue. Specifically, our natural curiosity is what is going to happen if we have an infinitely deep but finitely wide network. Can we derive an NNGP by increasing depth rather than width, which contributes to understanding the true picture of deep learning? If this question is positively answered, we are able to reconcile the successes of deep networks and the elegance of the NNGP theory. What's more, as a valuable addition, the depth-induced NNGP greatly enlarges the scope of the existing NNGP theory, which is posited to open lots of doors for research and translation opportunities in this area.

\begin{figure}[t]
	\centering
	\includegraphics[width=\linewidth]{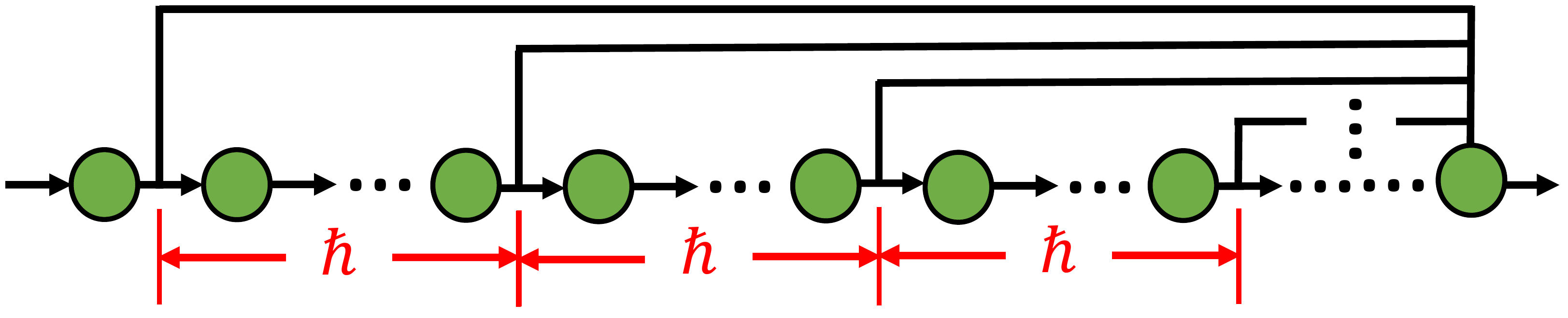}
	\caption{A deep topology that can induce a neural network Gaussian process by increasing depth.}
	\label{Figure_topology}
	\vspace{-0.5cm}
\end{figure}

The above idea is well-motivated based on a width-depth symmetry consideration. Previously, Lu \textit{et al.} \cite{lu2017expressive} and Hornik \textit{et al.} \cite{hornik1989multilayer} have respectively proved that the width-bounded and depth-bounded neural networks are universal approximators. Fan \textit{et al.} \cite{fan2020quasi} suggested that a wide network and a deep network can be converted to each other with a negligible error by De Morgan's law. Since somehow there exists a symmetry between width and depth, deepening a neural network in certain conditions can likely lead to an NNGP as well. Along this direction, we investigate the feasibility of inducing an NNGP by depth (NNGP$^{(d)}$), with a network of a shortcut topology in Figure \ref{Figure_topology}. The characteristic of this topology is that outputs of intermediate layers with a gap of $\hbar$ are aggregated in the final layer, yielding the network output. Such a shortcut topology has been successfully applied to medical imaging \cite{you2019ct} and computer vision \cite{fan2018sparse} as a backbone structure. 

An NNGP by width (NNGP$^{(w)}$) is accomplished by summing the i.i.d. output terms of infinitely many neurons and applying Central Limit Theorem. In contrast, for the topology in Figure \ref{Figure_topology}, as the depth increases, the outputs of increasingly many neurons are aggregated together. We constrain the random weights and biases such that those summed neurons turn weakly dependent by the virtue of their separation. Consequently, when going infinitely deep, the network is also a function drawn from a Gaussian process according to the generalized Central Limit Theorem under weak dependence \cite{b1995:clt}. Beyond the proposed NNGP$^{(d)}$, we theoretically prove that NNGP$^{(d)}$ is uniformly tight and provide a tight bound of the smallest eigenvalue of the concerned NNGP$^{(d)}$ kernel. From the former, one can determine the properties of NNGP$^{(d)}$ such as the functional limit and continuity, while the non-trivial lower and upper bounds mirror the characteristics of the derived kernel, which constitutes a cornerstone for its optimization and generalization properties. 

\textbf{Main Contributions.} In this manuscript, we establish the NNGP by increasing depth, in contrast to the present mainstream NNGPs that are induced by width. Our work substantially enlarges the scope of the existing elegant NNGP theory, making a stride towards understanding the true picture of deep learning. Furthermore, we investigate the essential properties of the proposed NNGP and its associated kernel, which lays a solid foundation for future research and applications. Lastly, we implement an NNGP$^{(d)}$ kernel and apply it for regression experiments on benchmark datasets.

\section{PRELIMINARIES}
Let $[N] = \{1,2,\dots,N\}$ be the set for an integer $N > 0$. Given a function $g(n)$, we denote by $h_1(n)=\Theta(g(n))$ if there exist positive constants $c_1, c_2,$ and $n_0$ such that $c_1g(n) \leq h_1(n) \leq c_2g(n)$ for every $n \geq n_0$; $h_2(n)=\mathcal{O}(g(n))$ if there exist positive constants $c$ and $n_0$ such that $h_2(n) \leq cg(n)$ for every $n \geq n_0$; $h_3(n)=\Omega(g(n))$ if there exist positive constants $c$ and $n_0$ such that $h_3(n) \geq cg(n)$ for every $n \geq n_0$. Let $\| \mathbf{W} \|$ denote the matrix norm for the matrix $\mathbf{W} \in \mathbb{R}^{n \times m}$. Throughout this paper, we employ the maximum spectral norm
\[
\|\mathbf{W}\| \overset{\underset{\mathrm{def}}{}}{=} \max_k |\lambda_k|, \quad\text{for}\quad k \in [\min\{m,n\}],
\]
as the matrix norm \cite{meyer2000:norm}, where $\lambda_k$ denotes the $k$-th singular value of the matrix $\mathbf{W}$. Let $|\cdot|_{\#}$ denote the number of elements, \textit{e.g.}, $|\mathbf{W}|_{\#}=nm$. Finally, we provide several definitions for the characterization of inputs and parameters.
\begin{definition} \label{def:inputs}
	A data distribution $P$ is said to be \textbf{well-scaled}, if the following conditions hold for $\boldsymbol{x} \in \mathbb{R}^d$:
	\begin{enumerate}
		\item $\int \boldsymbol{x} \dif P\left(\boldsymbol{x}\right) = 0 $;
		\item $\int\|\boldsymbol{x}\|_{2} \dif P(\boldsymbol{x})=\Theta(\sqrt{d})$;
		\item $\int\|\boldsymbol{x}\|_{2}^{2} \dif P(\boldsymbol{x})=\Theta(d)$.
	\end{enumerate}
\end{definition}
\begin{definition} \label{def:well_posed}
	A  function $\sigma:\mathbb{R}\to\mathbb{R}$ is said to be \textbf{well-posed}, if $\sigma$ is first-order differentiable, and its derivative is bounded by a certain constant $C_{\sigma}$. Specially, the commonly used activation functions like ReLU, tanh, and sigmoid are well-posed (Please see Table~\ref{tab:activation}).
\end{definition}
\begin{definition} \label{def:stable_pertinent}
	A matrix $\mathbf{V}$ is said to be \textbf{stable-pertinent} for a well-posed activation function $\sigma$, in short $\mathbf{V} \in SP(\sigma)$, if the inequality $C_{\sigma} \|\mathbf{V}\| < 1$ holds.
\end{definition}

\begin{table}[t]
	\centering
	\caption{Well-posedness of the commonly-used activation functions.}
	\label{tab:activation}
	\begin{tabular}{|l|l|}
		\hline
		Activations & Well-Posedness  \\ \hline
		ReLU & $\|\sigma'(\boldsymbol{x})\| \leq 1$ \\ \hline
		$\tanh$     & $\|\sigma'(\boldsymbol{x})\| = \| 1- \sigma^2(\boldsymbol{x}) \| \leq 1$  \\ \hline
		sigmoid  & $\|\sigma'(\boldsymbol{x})\| = \| \sigma(\boldsymbol{x})(1- \sigma(\boldsymbol{x})) \| \leq 1/4$  \\ \hline
	\end{tabular} 
	\vspace{-0.2cm}
\end{table}

\section{MAIN RESULTS} \label{sec:GP}
In this section, we formally present the neural network Gaussian process NNGP$^{(d)}$, led by an infinitely deep but finitely wide neural network with i.i.d. weight parameters. We also derive the uniform tightness for NNGP$^{(d)}$ with the increased depth and the bound estimation of its associated kernel's smallest eigenvalue. These two valuable characterizations serve as the solid cornerstones for NNGP$^{(d)}$.

\subsection{Neural Network Gaussian Process with Increasing Depth} \label{subsec:GP}
Consider an $L$-layer neural network whose topology is illustrated as Figure~\ref{Figure_topology}, the feed-forward propagation follows
\begin{equation} \label{eq:forward}
	\begin{cases}
		& \boldsymbol{z}^0 = \boldsymbol{x} \\
		& \boldsymbol{z}^l = \sigma(\mathbf{W}^l \boldsymbol{z}^{l-1} + \boldsymbol{b}^l) \ ,
	\end{cases}
\end{equation}
where $\mathbf{W}^l$ and $\boldsymbol{b}^l$ are the weight matrix and bias vector of the $l^{th}$ layer, respectively, and $\sigma$ is the activation function. Invoking shortcut connections, the final output of this network is a mean of $\kappa \in \mathbb{N}^+$ previous layers with an equal separation $\hbar\in\mathbb{N}^+$ and $l_1 \in [L]$
\begin{equation} \label{eq:shortcut}
	f(\boldsymbol{x}; \boldsymbol{\theta}) = \frac{1}{\sqrt{M_{\boldsymbol{z}}}} \sum_{\kappa=0}^{K} \mathbf{1}^{l_1 + \kappa \hbar} \boldsymbol{z}^{l_1 + \kappa \hbar} \ ,
\end{equation}
where the matrix $\mathbf{1}^{l_1+\kappa\hbar} \in \{1\}^{n_o \times n_{l_1+\kappa\hbar}}$ indicates the unit shortcut connection between $\boldsymbol{z}^{l_1+\kappa\hbar}$ and the final layer, and $M_{\boldsymbol{z}}$ denotes the summed number of concerned hidden neurons
\[
M_{\boldsymbol{z}} = \sum\nolimits_{\kappa=0}^{K} n_{\kappa} \quad\text{with}\quad n_{\kappa} = |\boldsymbol{z}^{l_1 + \kappa \hbar}|_{\#} \ .
\]
Let $\boldsymbol{\theta} = \mathrm{concat}( \bigcup_{l=1}^L \mathrm{vec}(\boldsymbol{b}^l,\mathbf{W}^l) )$ be the concatenation of all vectorized weight matrices and $n = |\boldsymbol{\theta}|_{\#}$. Regarding the neural network $f:\mathbb{R}^d \to \mathbb{R}^{n_o}$, we present the first main theorem as follows:
\begin{theorem} \label{thm:existing}
	The infinitely deep neural network, defined by Eqs.~\eqref{eq:forward} and~\eqref{eq:shortcut}, is equivalent to a Gaussian process NNGP$^{(d)}$, if $\sigma$ is well-posed and the augmented parameter matrix of each layer is stable-pertinent for $\sigma$, that is, $( \mathbf{W}^l, \boldsymbol{b}^l ) \in SP(\sigma)$, for $\forall~ l\in [L]$.
\end{theorem}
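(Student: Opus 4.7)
The plan is to view $f(\x;\boldsymbol\theta)$ as a normalized sum of random vectors $\mathbf{1}^{l_1+\kappa\hbar}\boldsymbol{z}^{l_1+\kappa\hbar}$ indexed by $\kappa=0,\dots,K$, and to conclude Gaussianity by invoking the generalized Central Limit Theorem for weakly dependent sequences (the Bulinski 1995 result cited just before the theorem). The classical width-induced NNGP sums i.i.d.\ neuron outputs in a single wide layer; here, by contrast, the summands live in different layers of one deep network and share upstream random parameters, so they are probabilistically dependent. The crux of the argument is to show that the stable-pertinent assumption $C_\sigma\|\mathbf{W}^l\|<1$ forces this dependence to decay geometrically in the gap $\hbar$, turning the sequence $(\boldsymbol{z}^{l_1+\kappa\hbar})_\kappa$ into a weakly dependent one to which the generalized CLT applies.

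First, I would make the Markov-chain structure of $(\boldsymbol{z}^l)$ explicit: conditionally on the parameters up to layer $l_1+\kappa\hbar$, the vector $\boldsymbol{z}^{l_1+(\kappa+1)\hbar}$ depends on the past only through $\boldsymbol{z}^{l_1+\kappa\hbar}$ together with the fresh i.i.d.\ parameters $(\mathbf{W}^{l_1+\kappa\hbar+j},\boldsymbol{b}^{l_1+\kappa\hbar+j})_{j=1}^{\hbar}$. The well-posedness of $\sigma$ with $|\sigma'|\le C_\sigma$ and the mean value theorem give the one-step Lipschitz estimate
\[
\|\boldsymbol{z}^{l+1}-\tilde{\boldsymbol{z}}^{l+1}\|
=\|\sigma(\mathbf{W}^{l+1}\boldsymbol{z}^l+\boldsymbol{b}^{l+1})-\sigma(\mathbf{W}^{l+1}\tilde{\boldsymbol{z}}^l+\boldsymbol{b}^{l+1})\|
\le C_\sigma\|\mathbf{W}^{l+1}\|\,\|\boldsymbol{z}^l-\tilde{\boldsymbol{z}}^l\|,
\]
so iterating $\hbar$ layers yields a contraction factor $\prod_{j=1}^{\hbar}C_\sigma\|\mathbf{W}^{l_1+\kappa\hbar+j}\|<1$. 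Running two copies of the chain from different initial states under a common realization of the fresh parameters (a synchronous coupling) then shows that the coupling distance contracts geometrically in $\hbar$, which translates into an $\alpha$- or $\phi$-mixing coefficient of order $r^\kappa$ for some $r\in(0,1)$ between the blocks separated by $\hbar$ layers. This is precisely the weak-dependence hypothesis required by the generalized CLT.

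Second, I would verify the moment and Lindeberg-type conditions. The well-scaled input distribution controls $\boldsymbol{z}^0=\boldsymbol{x}$, and the contraction together with bounded $\sigma'$ produces uniform second-moment bounds on $\boldsymbol{z}^{l_1+\kappa\hbar}$ across $\kappa$. Combining these with the geometric mixing established above, Bulinski's generalized CLT yields, for every fixed input $\boldsymbol{x}$, convergence of $f(\boldsymbol{x};\boldsymbol\theta)$ to a centered Gaussian as $K\to\infty$ (equivalently as $L\to\infty$). To upgrade pointwise Gaussianity to a Gaussian process, I apply the Cramér--Wold device: for any finite set of inputs $\boldsymbol{x}_1,\dots,\boldsymbol{x}_m$ and any coefficients $\alpha_i$, the linear combination $\sum_i\alpha_i f(\boldsymbol{x}_i;\boldsymbol\theta)$ is again a normalized sum of weakly dependent summands of the same form, so the same CLT gives joint Gaussianity. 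The covariance function $\Sigma(\boldsymbol{x},\boldsymbol{x}')$ emerges as the limiting Cesàro sum of cross-covariances between the layer-wise summands, which is well-defined under geometric mixing.

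The main obstacle I expect is the dependence-decay step: converting the deterministic spectral contraction $C_\sigma\|\mathbf{W}^l\|<1$ into a quantitative bound on a probabilistic mixing coefficient that is uniform over parameter realizations and across indices $\kappa$. The coupling argument sketched above is the natural route, but care is needed to handle (i) the fact that $\|\mathbf{W}^l\|$ is itself random so the contraction factor must be controlled in expectation (not merely pathwise), and (ii) the normalization $1/\sqrt{M_{\boldsymbol z}}$, which must match the effective variance of a geometrically weakly dependent sum rather than the i.i.d.\ variance. Once this quantitative mixing bound is in hand, the remaining steps reduce to a direct invocation of a dependent-CLT and the Cramér--Wold device.
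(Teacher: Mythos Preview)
Your proposal follows the same two-step skeleton as the paper: first establish that the layer subsequence $(\boldsymbol{z}^{l_1+\kappa\hbar})_\kappa$ is weakly dependent by exploiting the contraction $C_\sigma\|(\mathbf{W}^l,\boldsymbol{b}^l)\|<1$, and then invoke a generalized Central Limit Theorem for mixing sequences (the paper's Lemma~\ref{lemma:CLT}, which cites Billingsley's Theorem~27.5 rather than Bulinski). The one substantive technical difference is in how the mixing bound is obtained. The paper's Lemma~\ref{lemma:weak} argues via Sklar's theorem and copula functions: it bounds $\partial Z^{l+s}/\partial Z^l$ by $R^s$ and then asserts that the $\beta$-mixing coefficient is proportional to a copula coefficient $C_l(s)\in\Omega(R^s)$. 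Your synchronous-coupling route is the more standard probabilistic device for converting a pathwise Lipschitz contraction into a quantitative mixing estimate, and it makes the dependence-decay step more transparent than the copula argument, which in the paper is left somewhat informal. You also add two points the paper leaves implicit: the Cram\'er--Wold step to pass from marginal to joint Gaussianity of $(f(\boldsymbol{x}_1),\dots,f(\boldsymbol{x}_m))$, and the need to reconcile the normalization $1/\sqrt{M_{\boldsymbol z}}$ with the variance of a weakly dependent (rather than i.i.d.) sum. Both are genuine issues and your flagging them is appropriate; the paper's proof does not address either explicitly.
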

Theorem~\ref{thm:existing} states that our proposed neural network converges to a Gaussian process as $L \to \infty$. Given a data set $\mathcal{D}=\{(\boldsymbol{x}_i,y_i)\}_{i=1}^N$, the limit output variables of this network belongs to a multivariate Gaussian distribution $\mathcal{N}(0,\mathbf{K}_{\mathcal{D},\mathcal{D}})$ whose mean equals to 0 and covariance matrix is an $N \times N$ matrix, the $(i,j)$-entry of which is defined as
\begin{equation}
	\mathbf{K}(\boldsymbol{x}_i,\boldsymbol{x}_j) = \mathbb{E}[\langle f(\boldsymbol{x}_i; \boldsymbol{\theta}), f(\boldsymbol{x}_j; \boldsymbol{\theta})\rangle], \quad\text{for}\quad \boldsymbol{x}_i, \boldsymbol{x}_j \in \mathcal{D} \ .
	\label{eqn:kernel}    
\end{equation}

The key idea of proving Theorem~\ref{thm:existing} is to show that our proposed neural network converges to a Gaussian process as depth increases according to the generalized Central Limit Theorem with weakly dependent variables instead of random ones. To implement this idea, we constrain the weights and biases to enable that random variables of two hidden layers with a sufficient separation degenerate to weak dependence, \textit{i.e.}, mixing processes. By aggregating the weakly dependent variables to the final layer via shortcut connections, the output of the proposed network converges to a Gaussian process as the depth goes to infinity. The key steps are formally stated by Lemmas~\ref{lemma:weak} and~\ref{lemma:CLT} as follows:

\begin{lemma} \label{lemma:weak}
	Provided a well-posed $\sigma$ and stable-pertinent parameter matrices, the concerned neural network comprises a stochastic sequence of weakly dependent variables as the depth goes to infinity.
\end{lemma}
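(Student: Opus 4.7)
The plan is to identify the stochastic sequence of interest as the hidden outputs $\{\boldsymbol{z}^{l_1+\kappa\hbar}\}_{\kappa\ge 0}$ that feed into the shortcut aggregation~\eqref{eq:shortcut}, and to show that its dependence decays geometrically in the separation index $\kappa$---which is precisely the form of weak dependence needed by the generalised CLT invoked in Lemma~\ref{lemma:CLT}. Because the contraction bounds we will derive are Lipschitz in nature, I would cast the statement in terms of a Lipschitz-type mixing coefficient (such as Doukhan--Louhichi $\theta$-dependence, or an $\alpha$-mixing coefficient restricted to Lipschitz test functions), which is tailored to this setting.

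First, the recursion~\eqref{eq:forward} realises $\{\boldsymbol{z}^l\}$ as a Markov chain driven by the i.i.d.\ parameters $\{(\mathbf{W}^l,\boldsymbol{b}^l)\}$: conditional on $\boldsymbol{z}^{l-1}$, the update is independent of everything at layers $\le l-2$. The $\sigma$-algebras thus decouple in the usual Markov sense, and the task reduces to quantifying how much memory of $\boldsymbol{z}^{l-1}$ survives a block of $\hbar$ further updates.

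Second, well-posedness of $\sigma$ (Definition~\ref{def:well_posed}) combined with stable-pertinence (Definition~\ref{def:stable_pertinent}) yields the pathwise one-step contraction
\[
\bigl\|\sigma(\mathbf{W}^l\boldsymbol{u}+\boldsymbol{b}^l)-\sigma(\mathbf{W}^l\boldsymbol{v}+\boldsymbol{b}^l)\bigr\|\le C_\sigma\|\mathbf{W}^l\|\,\|\boldsymbol{u}-\boldsymbol{v}\|,
\]
with $C_\sigma\|\mathbf{W}^l\|<1$ almost surely. Iterating over $k\hbar$ layers, the forward map from $\boldsymbol{z}^l$ to $\boldsymbol{z}^{l+k\hbar}$ is a.s.\ Lipschitz with constant at most $\rho^{k\hbar}$ for some $\rho\in(0,1)$. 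A synchronous coupling---running two copies of the chain from different initial states but driving them with the \emph{same} realised weights and biases---then contracts the coupled gap at rate $\rho^{k\hbar}$, which for $1$-Lipschitz test functions $g,h$ translates into
\[
\bigl|\mathrm{Cov}\!\bigl(g(\boldsymbol{z}^{l}),\,h(\boldsymbol{z}^{l+k\hbar})\bigr)\bigr|\le C\,\rho^{k\hbar},
\]
i.e.\ geometric decay of the chosen weak-dependence coefficient.

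The hard part will be this last step: turning a deterministic pathwise contraction into a probabilistic dependence bound in a form that the generalised CLT of Lemma~\ref{lemma:CLT} can consume. Concretely, I would need to (i) fix a notion of weak dependence whose coefficient is genuinely controlled by Lipschitz coupling, so the contraction rate transfers without loss, and (ii) obtain uniform-in-depth moment control of $\boldsymbol{z}^l$, so the initial coupling gap has finite expectation and the covariance bound does not degrade with depth. Both should follow from combining the sub-unital factor $C_\sigma\|\mathbf{W}^l\|$ with the bounded-derivative property of $\sigma$, which together force a uniform $L^2$ bound on the iterates and close the argument.
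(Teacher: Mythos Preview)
Your strategy and the paper's share the same engine---the layerwise contraction $C_\sigma\|\tilde{\mathbf{W}}^l\|<1$ forces memory of $\boldsymbol{z}^l$ to decay geometrically in the depth gap---but you package the conclusion differently. The paper works directly with the $\beta$-mixing coefficient: it writes $\partial Z^{l+s}/\partial Z^l\le R^s$, invokes Sklar's theorem to express the joint law via a copula $C_l(s)\in\Omega(R^s)$, and argues that $\beta(s)$ is proportional to $C_l(s)$, hence $\beta(s)\to 0$. You instead run a synchronous coupling and read off geometric decay of Lipschitz covariances, which is the Doukhan--Louhichi $\theta$-coefficient rather than $\beta$. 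Your route is arguably the more transparent one: the copula step in the paper is quite informal, whereas the coupling argument is self-contained and makes the role of the contraction explicit.

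The one point to watch is the interface with Lemma~\ref{lemma:CLT}. As stated there, the CLT is for $\beta$-mixing (then reduced to $\alpha$-mixing at rate $\mathcal{O}(t^{-5})$), and a pure Lipschitz contraction yields Wasserstein decay, not total-variation decay; upgrading coupling-in-metric to $\beta$-mixing typically requires an additional ingredient (a density or minorisation condition on the transition kernel) beyond the contraction alone. You flagged exactly this in your point~(i). The cleanest fix on your side is not to force $\beta$-mixing but to cite a CLT formulated directly for $\theta$- or $\tau$-dependent sequences (e.g.\ Dedecker--Doukhan), which consumes precisely the covariance bound your coupling delivers; the moment control in your point~(ii) then follows from the same sub-unital factor, since iterating the one-step Lipschitz bound against a fixed reference trajectory gives a uniform-in-$l$ bound on $\mathbb{E}\|\boldsymbol{z}^l\|^2$.
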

\begin{proof}
	Let $\boldsymbol{\mathcal{H}}_s^t$ denote the distribution of the random variable sequence $\{ Z^s, Z^{s+1}, \dots, Z^t \}$, where $0 \leq s < t$, and $\mathbf{Z}^{-t} = (Z^0, \dots, Z^t)$ indicates the vector of random variables before the timestamp $t$. We define a coefficient~\cite{zhang2020:hrp} as 
	\[
	\beta(s) = \sup_t \mathbb{E}_{ \mathbf{Z}^{-t}} \left[~ \| \boldsymbol{\mathcal{H}}_{t+s}^{+\infty}( \cdot\mid\mathbf{Z}^{-t} ) - \boldsymbol{\mathcal{H}}_{t+s}^{+\infty}(\cdot) \|_{\mu} \right] \ ,
	\]
	where $\boldsymbol{\mathcal{H}}(\cdot|\cdot)$ stands for a conditional probability distribution, and $\mu$ denotes a probability measure, or equally the $\sigma$-algebra of events $\mathcal{G}$ \cite{joe1997:Sklar}, which satisfies
	\[
	\| P-Q \|_{\mu} = \sup_{z \in \mathcal{G}} | P(z) - Q(z) | \ ,
	\]
	for two probability distributions $P$ and $Q$. According to Eq.~\eqref{eq:forward}, we have $Z^l = \sigma(\tilde{\mathbf{W}}^l \tilde{Z}^{l-1})$ for all $l \in [L]$, where $\tilde{\mathbf{W}}^l = ( \mathbf{W}^l, \boldsymbol{b}^l )$ and $\tilde{Z}^{l-1}= (Z^{l-1};1)$. Given the well-posed $\sigma$ and stable-pertinent parameter matrices, \textit{i.e.}, $\tilde{\mathbf{W}}^l \in SP(\sigma) $ for any $l \in [L]$, the followings hold
	\[
	\frac{\partial Z^{l+s}}{\partial Z^l} \leq R^s \quad\text{and}\quad \mathbb{E}_{Z^l,\tilde{\mathbf{W}}} \left[ \frac{|Z^{l+s}Z^l|}{|Z^l|} \right] \leq R^s |Z^l| \ ,
	\]
	where $C_{\sigma} \| \tilde{\mathbf{W}}^l \| \leq R <1$ and $s\in\mathbb{N}^+$. This implies that (informally) the ``dependence'' between variables $Z^l$ and $Z^{l+s}$ goes to be weak as $s \to \infty$. From Sklar’s theorem, we have
	\[
	\mathcal{H}^{l+s}(\cdot) \wedge \mathcal{H}^{l}(\cdot) = C_{l}(s) \cdot \mathcal{H}^{l+s}(\cdot) \cdot \mathcal{H}^{l}(\cdot) \ ,
	\]
	where $C_l(s) \in \Omega(R^s)$ is the corresponding Copula function. Further, it holds
	\[
	\boldsymbol{\mathcal{H}}_{l+s}^{+\infty}( \cdot\mid\mathbf{Z}^{-l} ) - \boldsymbol{\mathcal{H}}_{l+s}^{+\infty}(\cdot) = \sum_{l} C_l(s) \cdot \boldsymbol{\mathcal{H}}_{l+s}^{+\infty}(\cdot) \cdot \mathcal{H}^{l}(\cdot) \ .
	\]
	Since $C_l(s)$ is independent to the layer (\emph{i.e., time}) index $l$, we assert that $\beta(s)$ is proportional to $C_l(s)$. Thus, we have
	\[
	\beta(s) \rightarrow 0 \quad \text{as} \quad s \rightarrow +\infty \ .
	\]
	Therefore, the sequence $\{Z^t\}$ led by Eq.~\eqref{eq:forward} is $\beta$-mixing, or equally weakly dependent, which completes the proof.
\end{proof}

\begin{lemma} \label{lemma:CLT}
	Suppose that (i) a random variable sequence $\{ Z^s \}_{s=1}^{t}$ is weakly independent, satisfying $\beta$-mixing with an exponential convergence rate, (ii) for $ \forall s \in [t]$, we have
	\[
	\mathbb{E}[Z^s] = 0 \quad\text{and}\quad \mathbb{E}[(Z^s)^2] < \infty .
	\]
	Let $\Lambda_t = Z^1 + Z^2 + \dots + Z^t$, then we have
	\[
	\mu \overset{\underset{\mathrm{def}}{}}{=} \lim\limits_{t\to\infty} \mathbb{E}[\Lambda_t]=0 \quad\text{and}\quad \upsilon^2 \overset{\underset{\mathrm{def}}{}}{=} \lim\limits_{t\to\infty} \mathbb{E}(\Lambda_t^2) / t < \infty \ .
	\]
	Further, the limit variable $\Lambda_t/(\upsilon\sqrt{t})$ converges in distribution to $\mathcal{N}(0,1)$ as $t\to\infty$, provided $\upsilon \neq 0$.
\end{lemma}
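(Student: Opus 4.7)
The plan is to handle the three conclusions — mean, variance, and distributional limit — in sequence. I would dispatch the first two by direct moment computation combined with the standard covariance inequality for $\beta$-mixing sequences, and tackle the third via Bernstein's big-block/small-block decomposition, which reduces the non-independent CLT to a classical Lindeberg--Feller application.

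First, since $\mathbb{E}[Z^s]=0$ for every $s$, linearity of expectation gives $\mathbb{E}[\Lambda_t]=0$ for every $t$, so $\mu=0$ is immediate. For the variance, I would expand
\[
\mathbb{E}[\Lambda_t^2] = \sum_{s=1}^t \mathbb{E}[(Z^s)^2] + 2\sum_{1\le s<s'\le t}\mathbb{E}[Z^sZ^{s'}],
\]
and bound each covariance by Davydov's inequality, $|\mathbb{E}[Z^sZ^{s'}]| \le C\,\beta(|s-s'|)^{1/2}\bigl(\mathbb{E}[(Z^s)^2]\,\mathbb{E}[(Z^{s'})^2]\bigr)^{1/2}$. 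The assumed exponential decay of $\beta(\cdot)$ renders $\sum_{k\ge 1}\beta(k)^{1/2}$ a finite constant, so the off-diagonal sum is $O(t)$; dividing by $t$ and passing to the limit yields $\upsilon^2=\lim_t \mathbb{E}[\Lambda_t^2]/t<\infty$.

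For the asymptotic normality, I would partition $\{1,\dots,t\}$ into alternating big blocks $B_j$ of length $p_t$ and small blocks $S_j$ of length $q_t$, with $p_t\to\infty$, $q_t/p_t\to 0$, and $(t/p_t)\beta(q_t)\to 0$; the exponential mixing rate permits, for instance, $q_t=c\log t$ and $p_t=t^{1/2}$. Writing $U_j=\sum_{s\in B_j}Z^s$ and $V_j=\sum_{s\in S_j}Z^s$, the variance of $\sum_j V_j$ is of order $(t/p_t)\,q_t\,\upsilon^2=o(t)$, so the small-block contribution vanishes after $\sqrt{t}$-normalization. The big-block sums $\{U_j\}$ can then be coupled with mutually independent copies $\{\tilde U_j\}$ of identical marginals, with per-gap total-variation cost at most $\beta(q_t)$ and hence total cost $O((t/p_t)\beta(q_t))=o(1)$. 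A Lindeberg--Feller CLT applied to $\sum_j \tilde U_j/(\upsilon\sqrt{t})$ then delivers the claimed $\mathcal{N}(0,1)$ limit.

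The main obstacle is calibrating $p_t$ and $q_t$ and verifying Lindeberg's condition in the non-stationary setting of the network, where the distribution of $Z^s$ depends on the layer index $s$. Exponential $\beta$-mixing is exactly what permits the calibration to go through simultaneously; a merely polynomial rate would force additional moment assumptions beyond the second. For Lindeberg one needs a uniform second-moment bound $\sup_s \mathbb{E}[(Z^s)^2]<\infty$, which is supplied by the stability condition $(\mathbf{W}^l,\boldsymbol{b}^l)\in SP(\sigma)$ inherited from Theorem~\ref{thm:existing} via Lemma~\ref{lemma:weak}. With this verification complete, an equivalent and cleaner route is to invoke the CLT for $\beta$-mixing sequences of Bradley~\cite{b1995:clt} directly, since all of its hypotheses are now precisely those assumed in the lemma.
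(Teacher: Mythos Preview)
Your proposal is correct and considerably more detailed than the paper's own treatment. The paper does not carry out the Bernstein big-block/small-block construction at all; it simply observes (citing Doukhan) that a $\beta$-mixing sequence with exponential rate is automatically $\alpha$-mixing with rate $\mathcal{O}(t^{-5})$, and then invokes Billingsley's Theorem~27.5 as a black box---your citation \cite{b1995:clt} is Billingsley, not Bradley. Your blocking argument is essentially a reconstruction of the proof of that very theorem, so the two routes converge: the closing sentence of your proposal, where you suggest invoking the mixing CLT ``directly,'' is in fact the entirety of what the paper does. What your approach buys is self-containment and an explicit accounting of where each hypothesis (exponential rate, uniform second moments from the stable-pertinent condition) enters; what the paper's approach buys is brevity.

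One small technical caveat: the covariance bound you quote, $|\mathbb{E}[Z^sZ^{s'}]|\le C\,\beta(|s-s'|)^{1/2}\|Z^s\|_2\|Z^{s'}\|_2$, is not a standard form of Davydov's inequality---extracting a positive power of the mixing coefficient generally requires $\|Z^s\|_{2+\delta}$ rather than bare $L^2$ norms. In the network setting this is harmless, since the well-posed activation and stable-pertinent weights make the $Z^s$ uniformly bounded and hence give all moments for free; but strictly under the hypotheses as stated in the lemma you would need to adjust either the moment order or the exponent on $\beta$.
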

Lemma~\ref{lemma:CLT} is a variant of the generalized Central Limit Theorem under weak dependence. The proof idea can be summarized as follows. From \cite{doukhan2012:mixing}, it's observed that an $\beta$-mixing sequence with an exponential convergence rate can be covered by the $\alpha$-mixing one with $\mathcal{O}(t^{-5})$. Thus, the conditions of Lemma~\ref{lemma:CLT} satisfy the preconditions of the generalized Central Limit Theorem under weak dependence~\cite[Theorem 27.5]{b1995:clt}. This lemma also has alternative proofs according to the encyclopedic treatment of limit theorems under mixing conditions. Interested readers can refer to \cite{bradley2007:mixing} for more details.

\vspace{0.2cm}
\noindent\emph{\textbf{Finishing the Proof of Theorem~\ref{thm:existing}}}. Let $\boldsymbol{z}^l$ denote the output variables of the $l$-th layer, which satisfies that $ \boldsymbol{z}^{l+1} = \sigma( \mathbf{W}^{l+1} \boldsymbol{z}^l + \boldsymbol{b}^{l+1} )$ and $\boldsymbol{z}^{0} = \boldsymbol{x}$. Because the weights and biases are taken to be i.i.d., the sequence $\{\boldsymbol{z}^l\}$ $(l\in[L])$ leads to a stochastic process, and the post-activations in the same layer, such as $\boldsymbol{z}_i^l$ and $\boldsymbol{z}_j^l$ are independent for $i \neq j$. Given an integer $\hbar \in \mathbb{N}^+$, we select a sub-sequence of $\{ \boldsymbol{z}^l \}$ as follows:
\[
\mathcal{Z}^{l_1}_{\hbar} = \{ \boldsymbol{z}^{l_1+\hbar}, \boldsymbol{z}^{l_1+2\hbar}, \dots, \boldsymbol{z}^{l_1+\kappa\hbar}, \dots \} \ ,
\]
for $l_1\in [L]$ and $\kappa \in \mathbb{N}^+$, which satisfies ${l_1+\kappa\hbar} \leq L$. From Lemma~\ref{lemma:weak}, the sequence $\mathcal{Z}^{l_1}_{\hbar}$ leads to a weakly dependent stochastic process. Aggregating this sub-sequence with $\kappa$ shortcut connections to the output layer, the output of the concerned neural network converges to a Gaussian process as $\kappa\to\infty$ as well as $L \to \infty$, from Lemma~\ref{lemma:CLT}. $\hfill\square$

\begin{figure}[hbtp]
	\centering
	\includegraphics[width=0.7\linewidth]{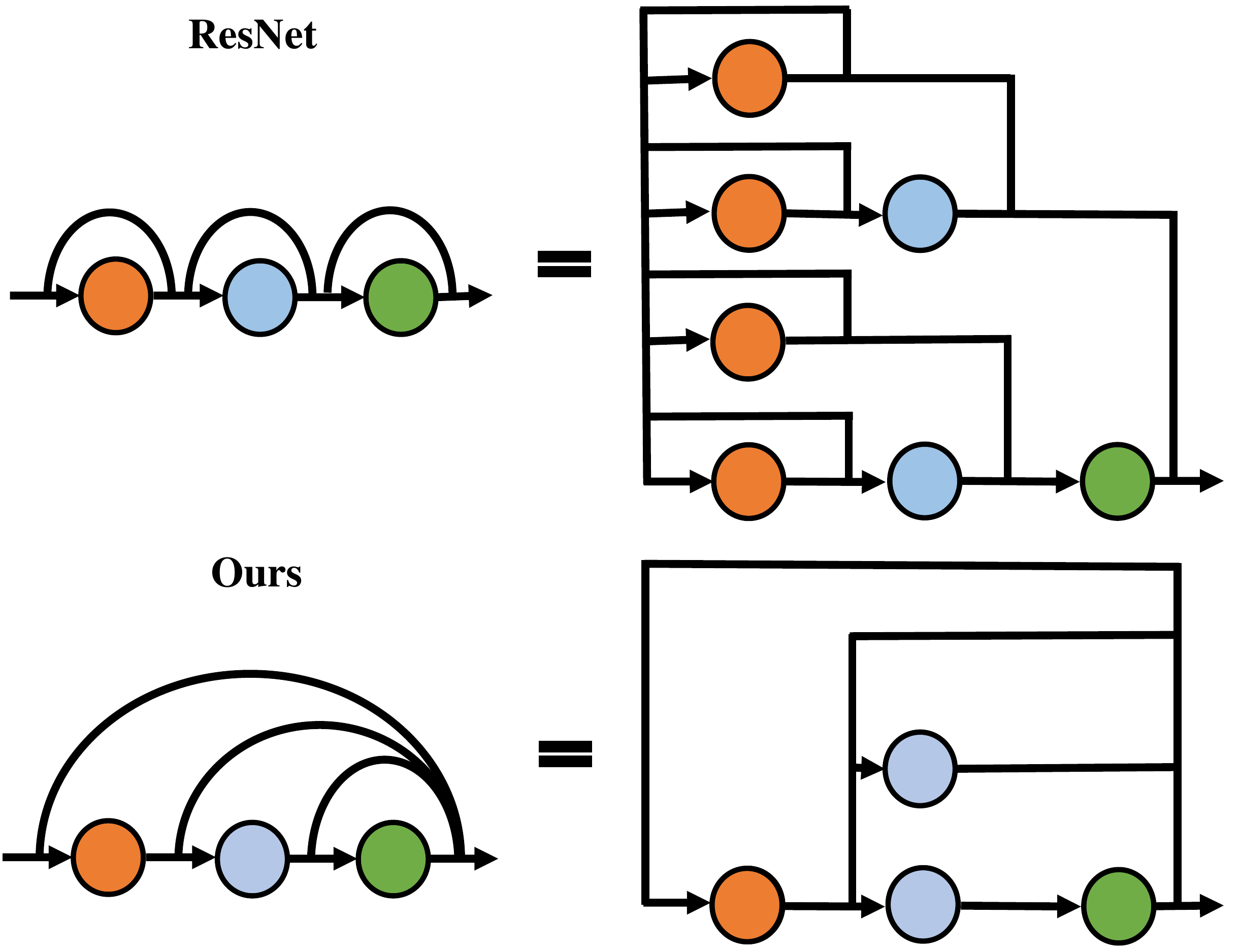}
	\caption{Both ResNet and ours can be regarded as wide networks in the unraveled view.}
	\label{Figure_unravel}
\end{figure}

\noindent\textbf{Discussions}. To the best of our knowledge, our proposed NNGP$^{(d)}$ is the first NNGP induced by increasing depth. Currently, there is no rigorous definition for width and depth. The way we claim depth just aligns with the conventional usage of the width and depth for a neural network, in which the depth is understood as the maximum number of neurons among all possible routes from the input to the output, and the width is the maximum number of neurons in a layer. As illustrated in Figure \ref{Figure_unravel}, if examined in an unraveled view, our network is a simultaneously wide and deep network due to the layer reuse in different routes. However, we argue that this will not affect our claim because not every layer has an infinite width in the unraveled view, which is different from the key character of NNGP$^{(w)}$. What's more, the conventional usage is more acceptable relative to the unraveled view; otherwise, it is against common sense because the ResNet is also a wide network in the unraveled view.


The existence of the proposed NNGP$^{(d)}$ kernel relies heavily on the generalized Central Limit Theorem, which holds on three conditions as mentioned in Lemma~\ref{lemma:CLT}: i) The random variable sequence is weakly dependent; ii) the random variable maintains a finite mathematical variance; iii) the input data are drawn from a compact set. According to these conditions, we make two remarks. First, as shown in Lemma~\ref{lemma:weak}, $\hbar$ provides a separation of the network depth to ensure that the layers at both ends of the separation interval are weakly dependent. Therefore, $\hbar$ is not necessarily an equal separation. Second, our proof doesn't prescribe the distribution of the input data, as long as the input data are drawn from a compact set.

\subsection{Uniform Tightness of NNGP$^{(d)}$}
In this subsection, we delineate the asymptotic behavior of NNGP$^{(d)}$ as the depth goes to infinity. Here, we assume that the weights and biases are i.i.d. sampled from $\mathcal{N}(0,\eta^2)$. Per the conditions of Theorem~\ref{thm:existing}, we have the following theorem:
\begin{theorem} \label{thm:asymptotic}
	For any $l_1 \in [L]$, the stochastic process, described in Lemma~\ref{lemma:weak}, is \textbf{uniformly tight} in $\mathcal{C}(\mathbb{R}^d,\mathbb{R})$.
\end{theorem}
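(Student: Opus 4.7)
The plan is to reduce uniform tightness in $\mathcal{C}(\mathbb{R}^d,\mathbb{R})$ (equipped with the topology of uniform convergence on compacta) to the two classical Arzel\`a--Ascoli ingredients: uniform-in-$L$ boundedness at a point and a uniform-in-$L$ modulus-of-continuity estimate, both in probability. Since tightness on $\mathcal{C}(\mathbb{R}^d,\mathbb{R})$ is equivalent to tightness of the restriction to every compact $D\subset\mathbb{R}^d$, it suffices to prove, for each such $D$ and every $\epsilon>0$, the existence of constants $M,M'$ (independent of $L$) such that
\[
\Pr\!\left[\sup_{\boldsymbol{x}\in D}|f(\boldsymbol{x};\boldsymbol{\theta})|\le M\ \text{and}\ \sup_{\boldsymbol{x}\neq\boldsymbol{y}\in D}\frac{|f(\boldsymbol{x};\boldsymbol{\theta})-f(\boldsymbol{y};\boldsymbol{\theta})|}{\|\boldsymbol{x}-\boldsymbol{y}\|}\le M'\right]\ge 1-\epsilon,
\]
because the set of Lipschitz functions of given bounds is compact in $\mathcal{C}(D,\mathbb{R})$.

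The workhorse of both estimates is a pathwise geometric contraction supplied by the stable-pertinent hypothesis. Writing $\boldsymbol{z}^l=\sigma(\tilde{\mathbf{W}}^l\tilde{\boldsymbol{z}}^{l-1})$ and using that $\sigma$ is well-posed with Lipschitz constant $C_\sigma$, the condition $C_\sigma\|\tilde{\mathbf{W}}^l\|\le R<1$ yields, by the mean value theorem applied coordinatewise,
\[
\|\boldsymbol{z}^l(\boldsymbol{x})-\boldsymbol{z}^l(\boldsymbol{y})\|\le R\,\|\boldsymbol{z}^{l-1}(\boldsymbol{x})-\boldsymbol{z}^{l-1}(\boldsymbol{y})\|,
\]
and similarly $\|\boldsymbol{z}^l(\boldsymbol{x})\|\le R\,\|\boldsymbol{z}^{l-1}(\boldsymbol{x})\|+\|\sigma(\mathbf{0})\|$. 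Iterating gives $\|\boldsymbol{z}^l(\boldsymbol{x})\|\le R^l\|\boldsymbol{x}\|+c_\sigma/(1-R)$ and $\|\boldsymbol{z}^l(\boldsymbol{x})-\boldsymbol{z}^l(\boldsymbol{y})\|\le R^l\|\boldsymbol{x}-\boldsymbol{y}\|$. Substituting into the shortcut aggregation~\eqref{eq:shortcut}, the Cauchy--Schwarz bound $\|\mathbf{1}^{l_1+\kappa\hbar}\boldsymbol{z}^{l_1+\kappa\hbar}\|\le\sqrt{n_o\,n_\kappa}\,\|\boldsymbol{z}^{l_1+\kappa\hbar}\|$ combined with $M_{\boldsymbol{z}}=\sum_\kappa n_\kappa$ and the geometric factor $R^{l_1+\kappa\hbar}$ produces the bounds
\[
\|f(\boldsymbol{x};\boldsymbol{\theta})\|\le \Phi(1+\|\boldsymbol{x}\|),\qquad |f(\boldsymbol{x};\boldsymbol{\theta})-f(\boldsymbol{y};\boldsymbol{\theta})|\le \Phi'\,\|\boldsymbol{x}-\boldsymbol{y}\|,
\]
where $\Phi,\Phi'$ depend on $R$, $C_\sigma$, $\eta$, and the widths but are uniform in $L$ because the geometric series $\sum_\kappa R^{\kappa\hbar}$ converges. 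A Markov / Borel--Cantelli step then converts the $L^2$ control on $\Phi,\Phi'$ (which one obtains from the Gaussian law of the weights conditioned on $SP(\sigma)$) into the high-probability event required above, giving tightness on $D$; an exhaustion of $\mathbb{R}^d$ by nested compacts and a diagonal argument complete the proof.

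The main obstacle is reconciling the Gaussian law $\mathcal{N}(0,\eta^2)$ on the entries of $\mathbf{W}^l,\boldsymbol{b}^l$ with the deterministic spectral cap $C_\sigma\|\tilde{\mathbf{W}}^l\|<1$ required by $SP(\sigma)$: for a Gaussian matrix this holds only on an event of probability $<1$, and the event has to be enforced simultaneously over infinitely many layers as $L\to\infty$. The right resolution is to treat the $SP(\sigma)$ condition as conditioning on $\mathcal{E}=\bigcap_l\{\tilde{\mathbf{W}}^l\in SP(\sigma)\}$ and show that under this conditioning the contraction factor $R$ can be chosen strictly less than one, e.g.\ by a Borel--Cantelli / union-bound argument combined with concentration of the Gaussian spectral norm, so that the geometric series above converges with a uniformly integrable prefactor. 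Once this uniformization is secured, the remaining bookkeeping in the Arzel\`a--Ascoli reduction is routine.
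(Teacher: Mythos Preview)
Your route is genuinely different from the paper's. The paper proves Theorem~\ref{thm:asymptotic} via a Kolmogorov-type tightness criterion (Lemma~\ref{lemma:tightness}): it verifies tightness of the one-point marginals at $\boldsymbol{x}=\boldsymbol{0}$ (Lemma~\ref{lemma:1}, using the convergence in distribution already established in Theorem~\ref{thm:existing}) and then establishes a moment increment bound
\[
\mathbb{E}\bigl[\,|Z^s(\boldsymbol{x})-Z^s(\boldsymbol{x}')|^{\alpha}\bigr]\le C\,\|\boldsymbol{x}-\boldsymbol{x}'\|^{\beta+d}
\]
by an inductive computation through the layers using the characteristic function of the weights (Lemma~\ref{lemma:2}). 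You instead extract a \emph{pathwise} Lipschitz contraction $\|\boldsymbol{z}^l(\boldsymbol{x})-\boldsymbol{z}^l(\boldsymbol{y})\|\le R^l\|\boldsymbol{x}-\boldsymbol{y}\|$ directly from the $SP(\sigma)$ hypothesis and feed it into Arzel\`a--Ascoli. For the family $\{Z^s\}_s$, which is what the theorem literally asserts tightness of, your bounds are deterministic and uniform in $s$, so the argument is both correct and more elementary than the paper's moment induction; no Markov/Borel--Cantelli step is even needed at that stage.

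There is, however, a real gap in the version you wrote for the aggregate $f$. Your claim that $\|f(\boldsymbol{x};\boldsymbol{\theta})\|\le\Phi(1+\|\boldsymbol{x}\|)$ with $\Phi$ \emph{uniform in $L$} does not follow from the geometric series. The iteration $\|\boldsymbol{z}^l\|\le R\|\boldsymbol{z}^{l-1}\|+B$ (with $B$ coming from the bias column of $\tilde{\mathbf{W}}^l$ and from $\sigma(\mathbf{0})$) gives $\|\boldsymbol{z}^l\|\le R^l\|\boldsymbol{x}\|+B/(1-R)$, and the second term does \emph{not} decay in $l$. After summing over $\kappa$ and dividing by $\sqrt{M_{\boldsymbol{z}}}$ you pick up a factor of order $\sqrt{K}$, so the deterministic pointwise bound on $f$ blows up as $L\to\infty$. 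The geometric-series argument is valid only for the Lipschitz constant $\Phi'$, where the increment $\|\boldsymbol{z}^l(\boldsymbol{x})-\boldsymbol{z}^l(\boldsymbol{y})\|$ genuinely carries the factor $R^l$. The correct way to get tightness of $f$ at a single point is not a deterministic bound but the convergence in distribution of $f(\boldsymbol{0};\boldsymbol{\theta})$ already supplied by Theorem~\ref{thm:existing}; this is exactly how the paper handles the first Kolmogorov condition in Lemma~\ref{lemma:1}. Finally, the Gaussian/$SP(\sigma)$ tension you flag is a standing ambiguity in the paper's hypotheses rather than an obstacle specific to your approach: the paper simply assumes both simultaneously and does not resolve it either.
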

Theorem~\ref{thm:asymptotic} reveals that the stochastic process contained by our network (illustrated in Figure \ref{Figure_topology}) is uniformly tight, which is an intrinsic characteristic of NNGP$^{(d)}$. Based on Theorem~\ref{thm:asymptotic}, one can obtain not only the functional limit and continuity properties of NNGP$^{(d)}$, in analogy to the results of NNGP$^{(w)}$ \cite{bracale2020:asymptotic}.
Similarly, we start the proof of Theorem~\ref{thm:asymptotic} with some useful lemmas.

\begin{lemma} \label{lemma:tightness}
	Let $\{Z^1, Z^2, \dots, Z^t\}$ denote a sequence of random variables in $\mathcal{C}(\mathbb{R}^d,\mathbb{R})$. This stochastic process is \textbf{uniformly tight} in $\mathcal{C}(\mathbb{R}^d,\mathbb{R})$, if (1) $\boldsymbol{x}=\boldsymbol{0}$ is a uniformly tight point of $Z^s(\boldsymbol{x})$ ($s \in [t]$) in $\mathcal{C}(\mathbb{R}^d,\mathbb{R})$; (2) for any $\boldsymbol{x}, \boldsymbol{x}' \in \mathbb{R}^d$ and $s \in [t]$, there exist $\alpha, \beta, C >0$, such that $\mathbb{E} \left[ | Z^s(\boldsymbol{x}) - Z^s(\boldsymbol{x}') |^{\alpha} \right] \leq C \| \boldsymbol{x} - \boldsymbol{x}' \|^{\beta+d} $.
\end{lemma}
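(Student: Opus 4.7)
The plan is to recognize Lemma~\ref{lemma:tightness} as a version of the Kolmogorov--Chentsov tightness criterion on $\mathcal{C}(\mathbb{R}^d,\mathbb{R})$, equipped with the topology of uniform convergence on compact sets. Accordingly, I would first reduce the problem to uniform tightness in $\mathcal{C}(K,\mathbb{R})$ for each compact set $K\subset\mathbb{R}^d$: a family of laws on $\mathcal{C}(\mathbb{R}^d,\mathbb{R})$ is tight if and only if its restriction to every compact $K$ is tight (the topology is metrizable by a countable family of sup-seminorms over a compact exhaustion $K_1\subset K_2\subset\cdots$), so a standard diagonal intersection over such an exhaustion patches compact-by-compact tightness into global tightness. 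This is pure bookkeeping and will be handled last.

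Next I would fix a compact $K\ni\boldsymbol{0}$ and apply the classical Arzel\`a--Ascoli-based criterion: a family is tight in $\mathcal{C}(K,\mathbb{R})$ if (a) it is tight at one point of $K$, and (b) the modulus of continuity
$\omega_{Z^s}(\delta)=\sup\{|Z^s(\boldsymbol{x})-Z^s(\boldsymbol{x}')|:\boldsymbol{x},\boldsymbol{x}'\in K,\,\|\boldsymbol{x}-\boldsymbol{x}'\|<\delta\}$
satisfies $\lim_{\delta\downarrow 0}\sup_{s}\mathbb{P}(\omega_{Z^s}(\delta)>\epsilon)=0$ for every $\epsilon>0$. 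Hypothesis (1) delivers (a) with base point $\boldsymbol{0}$. The main work is in establishing (b) from hypothesis (2). Here I would run a Kolmogorov--Chentsov dyadic chaining argument on $K$: for each level $n$, cover $K$ by an $\mathcal{O}(2^{nd})$-sized dyadic grid, and for each neighbouring pair $(\boldsymbol{x},\boldsymbol{x}')$ at scale $2^{-n}$ use Markov together with the moment bound from (2) to get
$\mathbb{P}(|Z^s(\boldsymbol{x})-Z^s(\boldsymbol{x}')|>2^{-n\gamma})\leq C\,2^{-n(\beta+d)}/2^{-n\alpha\gamma}$. A union bound over all $\mathcal{O}(2^{nd})$ pairs, followed by summation over $n$, yields a $\gamma$-H\"older modulus of continuity with high probability for any $\gamma<\beta/\alpha$. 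Crucially, the constants depend only on $\alpha,\beta,C,K$, not on $s$, so this gives precisely the uniform-in-$s$ control required by (b).

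The closing step is to combine these: condition (1) places $Z^s(\boldsymbol{0})$ in a compact subset of $\mathbb{R}$ with probability at least $1-\epsilon/2$ uniformly in $s$; the chaining bound of the previous paragraph places $Z^s|_K$ in a uniformly $\gamma$-H\"older ball with some radius $M_\epsilon$ (independent of $s$) with probability at least $1-\epsilon/2$. Taking the intersection yields a subset of $\mathcal{C}(K,\mathbb{R})$ that is uniformly bounded and uniformly equicontinuous on $K$, hence relatively compact by Arzel\`a--Ascoli, and is visited by $Z^s|_K$ with probability at least $1-\epsilon$ for every $s$. Intersecting the compact pieces across a compact exhaustion with tolerances $\epsilon/2^m$ delivers the desired uniform tightness in $\mathcal{C}(\mathbb{R}^d,\mathbb{R})$.

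The main obstacle will be the chaining step: extracting a genuinely uniform modulus-of-continuity bound from the pointwise moment inequality in (2). The extra exponent $d$ in $\|\boldsymbol{x}-\boldsymbol{x}'\|^{\beta+d}$ is exactly what compensates for the $2^{nd}$ dyadic pairs in the union bound, and one has to track the constants so that they remain independent of $s$. Once that technical core is in place, the Arzel\`a--Ascoli packaging and the exhaustion-of-$\mathbb{R}^d$ argument are routine.
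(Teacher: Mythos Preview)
Your proposal is correct and aligns with the paper's approach: the paper simply states that Lemma~\ref{lemma:tightness} ``can be straightforwardly derived from Kolmogorov Continuity Theorem~\cite{stroock1997:Kolmogorov}, provided the Polish space $(\mathbb{R},|\cdot|)$,'' without further detail. You have supplied exactly the standard Kolmogorov--Chentsov chaining argument and Arzel\`a--Ascoli packaging that underlie that citation, so there is no substantive divergence---you are filling in what the paper leaves as a reference.
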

Lemma~\ref{lemma:tightness} is the core guidance for proving Theorem~\ref{thm:asymptotic}. This lemma can be straightforwardly derived from Kolmogorov Continuity Theorem~\cite{stroock1997:Kolmogorov}, provided the Polish space $(\mathbb{R}, |\cdot|)$. 
\begin{lemma} \label{lemma:1}
	Based on the notations of Lemma~\ref{lemma:tightness}, $\boldsymbol{x}=\boldsymbol{0}$ is a uniformly tight point of $Z^s(\boldsymbol{x})$ ($s \in [t]$) in $\mathcal{C}(\mathbb{R}^d,\mathbb{R})$.
\end{lemma}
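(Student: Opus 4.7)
The goal is to show that the family of real-valued random variables $\{Z^s(\boldsymbol{0})\}_{s\in[t]}$ is tight, \emph{i.e.}~for every $\epsilon>0$ there is a compact interval $K\subset\mathbb{R}$ with $\Pr(Z^s(\boldsymbol{0})\in K)\geq 1-\epsilon$ uniformly in $s$. By Markov's inequality, this reduces to establishing a uniform second-moment bound
\begin{equation*}
\sup_{s\in[t]}\mathbb{E}\bigl[(Z^s(\boldsymbol{0}))^{2}\bigr]<\infty,
\end{equation*}
after which the closed ball of radius $\sqrt{B/\epsilon}$ provides the required compact set.

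The first step is to specialize the forward recursion~\eqref{eq:forward} at the input $\boldsymbol{x}=\boldsymbol{0}$. Because $\boldsymbol{z}^{0}(\boldsymbol{0})=\boldsymbol{0}$, the hidden vector $\boldsymbol{z}^{s}(\boldsymbol{0})$ is produced purely from the biases and weight matrices through $\boldsymbol{z}^{s}(\boldsymbol{0})=\sigma(\tilde{\mathbf{W}}^{s}\tilde{\boldsymbol{z}}^{s-1}(\boldsymbol{0}))$ with $\tilde{\boldsymbol{z}}^{s-1}(\boldsymbol{0})=(\boldsymbol{z}^{s-1}(\boldsymbol{0});1)$. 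Well-posedness of $\sigma$ makes it $C_{\sigma}$-Lipschitz, and the triangle inequality yields
\begin{equation*}
\|\boldsymbol{z}^{s}(\boldsymbol{0})\|\leq |\sigma(0)|\sqrt{n_{s}}+C_{\sigma}\|\tilde{\mathbf{W}}^{s}\|\,\|\tilde{\boldsymbol{z}}^{s-1}(\boldsymbol{0})\|.
\end{equation*}
Since $|Z^{s}(\boldsymbol{0})|\leq\|\boldsymbol{z}^{s}(\boldsymbol{0})\|$ for the scalar coordinate identified with $Z^{s}$, it suffices to control $M_{s}:=\mathbb{E}[\|\boldsymbol{z}^{s}(\boldsymbol{0})\|^{2}]$.

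The second step is to derive an affine recursion on $M_{s}$. Squaring the bound above via $(a+b)^{2}\leq 2a^{2}+2b^{2}$, taking expectations, and exploiting the independence of $\tilde{\mathbf{W}}^{s}$ from the earlier layers to factor the product yields $M_{s}\leq A+rM_{s-1}$, where $r:=2C_{\sigma}^{2}\,\mathbb{E}\|\tilde{\mathbf{W}}^{s}\|^{2}$ and $A$ collects the constants $2|\sigma(0)|^{2}n_{s}$ and $2C_{\sigma}^{2}\,\mathbb{E}\|\tilde{\mathbf{W}}^{s}\|^{2}$ coming from the augmented ``$1$'' in $\tilde{\boldsymbol{z}}^{s-1}$. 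Under the stable-pertinence hypothesis, $r<1$, so iterating from $M_{0}=0$ produces $M_{s}\leq A/(1-r)$ uniformly in $s$, and Markov's inequality then closes the tightness argument.

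The main obstacle I foresee is reconciling the pointwise form of stable-pertinence, $C_{\sigma}\|\tilde{\mathbf{W}}^{l}\|<1$ in Definition~\ref{def:stable_pertinent}, with the Gaussian weight assumption of Theorem~\ref{thm:asymptotic}, under which operator norms are not almost-surely bounded. I would resolve this by reading stable-pertinence in the averaged sense $2C_{\sigma}^{2}\,\mathbb{E}\|\tilde{\mathbf{W}}^{l}\|^{2}<1$, which in the finite-width regime is guaranteed by a sufficiently small Gaussian variance $\eta^{2}$; alternatively, one can condition on the high-probability event where the deterministic bound holds and absorb the complementary tail via Gaussian concentration of the operator norm. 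Either route preserves the contraction and therefore the lemma.
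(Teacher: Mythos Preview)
Your argument is essentially correct, but it takes a genuinely different route from the paper's own proof. The paper disposes of this lemma in two lines: it observes that (1) every probability measure on the Polish space $(\mathbb{R},|\cdot|)$ is automatically tight, so each $Z^{s}(\boldsymbol{0})$ is tight, and (2) the normalized partial sums $(Z^{1}(\boldsymbol{0})+\cdots+Z^{s}(\boldsymbol{0}))/s$ converge in distribution by Theorem~\ref{thm:existing}; it then declares that these two facts together yield uniform tightness at $\boldsymbol{0}$. No moment computation is carried out.

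By contrast, you give a direct quantitative argument: you exploit the Lipschitz property of $\sigma$ and the contraction implied by stable-pertinence to derive an affine recursion $M_{s}\leq A+rM_{s-1}$ on $M_{s}=\mathbb{E}\|\boldsymbol{z}^{s}(\boldsymbol{0})\|^{2}$, and then conclude via Markov. Your route is longer but more self-contained: it does not appeal to Theorem~\ref{thm:existing} and it produces an explicit uniform $L^{2}$ bound, which is strictly stronger than tightness and potentially reusable elsewhere (for instance in Lemma~\ref{lemma:2}). The paper's route is shorter but leans on a Prokhorov-type inference from convergence of averages, whose logical step from ``averages converge in distribution'' to ``individual terms are uniformly tight'' is not spelled out. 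Your caveat about the tension between the almost-sure bound in Definition~\ref{def:stable_pertinent} and the Gaussian initialization in Theorem~\ref{thm:asymptotic} is well taken; the paper does not address it either, and your proposed averaged reading $C_{\sigma}^{2}\mathbb{E}\|\tilde{\mathbf{W}}^{l}\|^{2}<1$ (achievable for small $\eta$) is the natural fix. Note also that for the activations in Table~\ref{tab:activation} one has $\sigma(0)=0$ (ReLU, $\tanh$) or $|\sigma(0)|$ bounded, so the $|\sigma(0)|\sqrt{n_{s}}$ term either vanishes or is harmless, and the factor $2$ from $(a+b)^{2}\leq 2a^{2}+2b^{2}$ can be dropped when $\sigma(0)=0$, giving $r=C_{\sigma}^{2}\mathbb{E}\|\tilde{\mathbf{W}}^{s}\|^{2}$ directly.
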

\begin{proof}
	It suffices to prove that 1) $\boldsymbol{x}=\boldsymbol{0}$ is a tight point of $Z^s(\boldsymbol{x})$ ($s \in [t]$) in $\mathcal{C}(\mathbb{R}^d,\mathbb{R})$ and 2) the statistic $(Z^1(\boldsymbol{0})+ \dots + Z^s(\boldsymbol{0})) / s$ converges in distribution as $s \to \infty$. Note that 1) is self-evident since every probability measure in $(\mathbb{R}, |\cdot|)$ is tight~\cite{zhang2021:arise}; 2) has been proved by Theorem~\ref{thm:existing}. Therefore, we finish the proof of this lemma.
\end{proof}

\noindent\textbf{Remark.} Notice that the convergence in distribution ($\overset{\underset{\mathrm{d}}{}}{\to}$) from Lemmas~\ref{lemma:CLT} and~\ref{lemma:1} paves the way for the convergence of expectations. Specifically, provided a linear and bounded functional $\mathcal{F}: \mathcal{C}(\mathbb{R}^d;\mathbb{R}^{n^*}) \to \mathbb{R}$ as $L\to\infty$ and a function $f$ which satisfies that $f(\boldsymbol{x};\boldsymbol{\theta}) \overset{\underset{\mathrm{d}}{}}{\to} f^*$, then we have
$\mathcal{F} (f(\boldsymbol{x};\boldsymbol{\theta})) \overset{\underset{\mathrm{d}}{}}{\to} \mathcal{F}(f^*)$
and $\mathbb{E} \left[ \mathcal{F} (f(\boldsymbol{x};\boldsymbol{\theta})) \right] \to \mathbb{E} \left[ \mathcal{F}(f^*) \right]$ according to General Transformation Theorem \cite[Theorem 2.3]{van2000:asymptotic} and Uniform Integrability \cite{billingsley2013:convergence}, respectively. These results may serve as solid bases for development and applications of NNGP$^{(d)}$ in the future.

\begin{lemma} \label{lemma:2}
	Based on the notations of Lemma~\ref{lemma:tightness}, for any $\boldsymbol{x}, \boldsymbol{x}' \in \mathbb{R}^d$ and $s \in [t]$, there exist $\alpha, \beta, C >0$, such that 
	\[
	\mathbb{E} \left[ \sup_i \big| Z^s_i(\boldsymbol{x}) - Z^s_i(\boldsymbol{x}') \big|^{\alpha} \right] \leq C \| \boldsymbol{x} - \boldsymbol{x}' \|^{\beta+d} \ .
	\]
\end{lemma}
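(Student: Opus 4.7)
The plan is to exploit the well-posedness of $\sigma$ together with the stable-pertinence hypothesis on each layer's weights to obtain a deterministic layerwise Lipschitz contraction, then raise it to a suitable power. First I would unfold one layer: since $\sigma$ is well-posed with derivative bound $C_\sigma$, it is $C_\sigma$-Lipschitz, so coordinatewise
\[
|Z^l_i(\boldsymbol{x}) - Z^l_i(\boldsymbol{x}')| \leq C_\sigma \big| \tilde{\mathbf{W}}^l_i \big( \tilde{Z}^{l-1}(\boldsymbol{x}) - \tilde{Z}^{l-1}(\boldsymbol{x}') \big) \big|,
\]
and aggregating coordinates yields $\|Z^l(\boldsymbol{x}) - Z^l(\boldsymbol{x}')\| \leq C_\sigma \|\tilde{\mathbf{W}}^l\| \cdot \|\tilde{Z}^{l-1}(\boldsymbol{x}) - \tilde{Z}^{l-1}(\boldsymbol{x}')\|$ (the appended bias coordinate drops out of the difference). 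Iterating this recursion down to $Z^0 = \boldsymbol{x}$ produces
\[
\big\| Z^s(\boldsymbol{x}) - Z^s(\boldsymbol{x}') \big\| \leq \prod_{l=1}^{s} \big( C_\sigma \|\tilde{\mathbf{W}}^l\| \big) \cdot \|\boldsymbol{x} - \boldsymbol{x}'\| \leq R^{s} \, \|\boldsymbol{x}-\boldsymbol{x}'\|,
\]
where $R<1$ is the uniform stable-pertinence constant from Definition~\ref{def:stable_pertinent}.

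Next I would pass from the norm bound to the supremum over indices. Since $\sup_i |Z^s_i(\boldsymbol{x}) - Z^s_i(\boldsymbol{x}')| \leq \|Z^s(\boldsymbol{x}) - Z^s(\boldsymbol{x}')\|$ (spectral norm dominates the $\ell_\infty$ norm up to a dimension-free factor), raising to the $\alpha$-th power and taking expectation over $\boldsymbol{\theta}$ yields
\[
\mathbb{E}\Big[ \sup_i \big| Z^s_i(\boldsymbol{x}) - Z^s_i(\boldsymbol{x}') \big|^{\alpha} \Big] \leq R^{s\alpha} \, \|\boldsymbol{x} - \boldsymbol{x}'\|^{\alpha}.
\]
At this point I would pick $\alpha = \beta + d$ for an arbitrary $\beta > 0$ and set $C := \sup_{s \geq 1} R^{s\alpha}$, which is finite (indeed bounded by $R^{\alpha}$) because $R < 1$. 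This delivers the required inequality with constants independent of $s$.

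The main obstacle I foresee is of a bookkeeping rather than a conceptual nature: making the Lipschitz inequality genuinely deterministic requires that the stable-pertinence $C_\sigma \|\tilde{\mathbf{W}}^l\| < 1$ be enforceable for every realization of the weights appearing in the expectation. Under the statement of Theorem~\ref{thm:existing} this is a structural assumption on $\tilde{\mathbf{W}}^l$, so the layerwise contraction can be applied pathwise and the expectation collapses. A secondary subtlety is that $\tilde{Z}^{l-1}$ has the constant $1$ appended for the bias, so one must verify carefully that only the non-bias block contributes to the difference $\tilde{Z}^{l-1}(\boldsymbol{x}) - \tilde{Z}^{l-1}(\boldsymbol{x}')$, which it does because the bias coordinate is identically $1$. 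Once these small points are nailed down, Lemma~\ref{lemma:2} follows, and together with Lemma~\ref{lemma:1} it supplies the two hypotheses of Lemma~\ref{lemma:tightness}, thereby proving Theorem~\ref{thm:asymptotic}.
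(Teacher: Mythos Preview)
Your argument is sound and considerably more elementary than the paper's own proof: under the pathwise stable-pertinence hypothesis $C_\sigma\|\tilde{\mathbf{W}}^l\|\le R<1$ (which is how Definition~\ref{def:stable_pertinent} is phrased and how Theorem~\ref{thm:existing} invokes it), the deterministic Lipschitz iteration you describe goes through cleanly, the expectation is vacuous, and the choice $\alpha=\beta+d$, $C=R^\alpha$ finishes the job with a constant uniform in $s$.

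The paper, however, takes a genuinely different route. In the preamble to Theorem~\ref{thm:asymptotic} it adds the assumption that the weights are i.i.d.\ $\mathcal{N}(0,\eta^2)$, and its proof of the lemma leans on this Gaussian structure rather than on a realizationwise norm bound. Concretely, it writes the characteristic function $\varphi(t)=e^{-\eta^2 t^2/2}$ of a single weight, expresses the layer variance through $\varphi\circ Z^{l-1}$, and runs an induction in $s$ in which each step contributes a factor involving $\eta^{\alpha}$, $L_\varphi^{\alpha}$, and the Gaussian absolute moment $\mathbb{E}[|\mathcal{N}(0,1)|^{\alpha}]$, arriving at the constant $C_{\eta,\theta,\alpha}=\eta^{\alpha(s+1)}L_\varphi^{\alpha s}\,\mathbb{E}[|\mathcal{N}(0,1)|^{\alpha}]^{s+1}$. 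Thus the paper obtains the bound by averaging over the weight law, whereas you obtain it by a pathwise contraction.

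What each buys: your argument is essentially a one-line consequence of well-posedness plus stable-pertinence and delivers a clean $s$-uniform constant immediately. The paper's argument is heavier (characteristic functions, Gaussian absolute moments, a full induction) but is tailored to the Gaussian-weight setting, where $\|\tilde{\mathbf{W}}^l\|$ is almost surely unbounded and a deterministic $R<1$ is unavailable; this is exactly the tension you flag as the ``main obstacle,'' and it is precisely the reason the paper computes moments rather than invoking a pointwise contraction.
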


The proof of Lemma~\ref{lemma:2} can be accessed from Appendix A. Further, Theorem~\ref{thm:asymptotic} can be completely proved by invoking Lemmas~\ref{lemma:1} and~\ref{lemma:2} into Lemma~\ref{lemma:tightness}.

\subsection{Tight Bound for the Smallest Eigenvalue}
In this subsection, we provide a tight bound for the smallest eigenvalue of the NNGP$^{(d)}$ kernel. For the NNGP$^{(d)}$ with ReLU activation, we have the following theorem:
\begin{theorem} \label{thm:smallest}
	Suppose that $\boldsymbol{x}_1, \dots, \boldsymbol{x}_N$ are i.i.d. sampled from $P_X = \mathcal{N}(0,\eta^2)$, and $P_X$ is a well-scaled distribution, then for an integer $r \geq 2$, with probability $1-\delta>0$, we have $\lambda_{\min} \left( \mathbf{K}_{\mathcal{D},\mathcal{D}} \right) = \Theta(d)$, where
	\[
	\delta \leq N e^{-\Omega(d)} + N^{2} e^{-\Omega\left(d N^{-2 /(r-0.5)}\right)} \ .
	\]
\end{theorem}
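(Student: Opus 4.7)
The overall strategy is a Hermite expansion of the NNGP$^{(d)}$ kernel combined with Gaussian concentration for the inputs, with the integer $r\geq 2$ serving as a free truncation order that trades off the two failure terms in $\delta$. First I would apply the Hanson--Wright inequality to $\|\boldsymbol{x}_i\|_2^2$ to establish that, on an event of probability at least $1-Ne^{-\Omega(d)}$, every diagonal entry satisfies $\|\boldsymbol{x}_i\|_2^2=\Theta(d)$. Because for ReLU and standard Gaussian weights one has $\mathbb{E}[\sigma(\boldsymbol{w}^\top\boldsymbol{x}_i)^2]=\tfrac{1}{2}\|\boldsymbol{x}_i\|^2$, and because the shortcut average in~\eqref{eq:shortcut} preserves this scaling across depths provided each $(\mathbf{W}^l,\boldsymbol{b}^l)$ is stable-pertinent, the diagonal entries $\mathbf{K}(\boldsymbol{x}_i,\boldsymbol{x}_i)$ are all $\Theta(d)$. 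This already gives the matching upper bound $\lambda_{\min}(\mathbf{K}_{\mathcal{D},\mathcal{D}})\leq \tr(\mathbf{K}_{\mathcal{D},\mathcal{D}})/N=\Theta(d)$ and explains the $Ne^{-\Omega(d)}$ summand of $\delta$.

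For the lower bound, I would expand the ReLU arc-cosine kernel of each layer in a Hermite series in the normalised inner product $\rho_{ij}=\langle\boldsymbol{x}_i,\boldsymbol{x}_j\rangle/(\|\boldsymbol{x}_i\|\,\|\boldsymbol{x}_j\|)$, namely
$$K_{\mathrm{arc}}(\boldsymbol{x}_i,\boldsymbol{x}_j)=\|\boldsymbol{x}_i\|\,\|\boldsymbol{x}_j\|\sum_{k\geq 0}\mu_k\,\rho_{ij}^k,$$
propagate this expansion through the layerwise recursion and the shortcut average in~\eqref{eq:shortcut}, and split $\mathbf{K}_{\mathcal{D},\mathcal{D}}=\mathbf{D}+\mathbf{A}^{(<r)}+\mathbf{A}^{(\geq r)}$ into the diagonal, a finite sum of low-order Hadamard-power off-diagonal terms, and a higher-order tail. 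A standard Gaussian concentration on the $\binom{N}{2}$ off-diagonal inner products then yields a uniform bound $|\rho_{ij}|=O(N^{-2/(2r-1)})$ on an event of probability at least $1-N^{2}e^{-\Omega(dN^{-2/(r-0.5)})}$, matching the second summand of $\delta$. A Gershgorin row-sum estimate fed by this bound controls both $\|\mathbf{A}^{(<r)}\|$ and $\|\mathbf{A}^{(\geq r)}\|$ by $o(d)$, and a Weyl perturbation then gives $\lambda_{\min}(\mathbf{K}_{\mathcal{D},\mathcal{D}})\geq \lambda_{\min}(\mathbf{D})-\|\mathbf{A}^{(<r)}\|-\|\mathbf{A}^{(\geq r)}\|=\Theta(d)$.

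The hard part will be twofold. Structurally, propagating the Hermite expansion cleanly through the recursive composition of arc-cosine kernels and through the shortcut average, and verifying that the stable-pertinence condition on $(\mathbf{W}^l,\boldsymbol{b}^l)$ prevents either the diagonal scale or the leading nonzero Hermite coefficient from being diluted layer-by-layer, is the main technical obstacle. Quantitatively, one must check that truncation at order $r$, the $O(N^{-2/(2r-1)})$ concentration of $\rho_{ij}$, and the $N^{2}$ union bound combine to yield exactly the stated $\delta$; the flexibility of choosing $r\geq 2$ is precisely what permits this balance between the diagonal concentration regime and the off-diagonal-perturbation regime.
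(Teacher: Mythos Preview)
Your upper-bound argument and your identification of the two concentration events match the paper. The lower-bound strategy, however, has a genuine gap at the step where you claim a Gershgorin row-sum bounds $\|\mathbf{A}^{(<r)}\|$ by $o(d)$. The low-order Hermite terms are \emph{not} small perturbations: the $k=0$ contribution to the off-diagonal is $\mu_0^2\,\|\boldsymbol{x}_i\|\,\|\boldsymbol{x}_j\|\cdot\rho_{ij}^0=\Theta(d)$ in every entry, giving a rank-one block with operator norm $\Theta(Nd)$; the $k=1$ row sum is of order $d\,N^{1-2/(2r-1)}$, which for $r\ge 2$ is at least $\Theta(dN^{1/3})$. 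Neither is $o(d)$, so the Weyl inequality $\lambda_{\min}(\mathbf{D})-\|\mathbf{A}^{(<r)}\|-\|\mathbf{A}^{(\ge r)}\|$ does not deliver the desired $\Theta(d)$.

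The fix is not a perturbation argument but a positive-semidefiniteness one, and this is exactly what the paper does. Each Hermite component is the Gram matrix of a Khatri--Rao power, hence PSD, so one may discard all but the order-$r$ term:
\[
\lambda_{\min}(\mathbf{C}_1)\;\ge\;\mu_r(\sigma)^2\,\lambda_{\min}\!\big(\mathbf{X}^{(r)}(\mathbf{X}^{(r)})^\top\big),
\]
and \emph{then} apply Gershgorin to this single $r$-th power Gram matrix, whose off-diagonal entries are $\langle\boldsymbol{x}_i,\boldsymbol{x}_j\rangle^r$ and are genuinely dominated by the diagonal $\|\boldsymbol{x}_i\|_2^{2r}$ under the inner-product concentration. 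The paper also sidesteps your ``hard part'' of propagating the Hermite expansion through the layerwise composition: it uses the eigenvalue monotonicity $\lambda_{\min}(\mathbf{C}_{l_1+\kappa\hbar})\ge\lambda_{\min}(\mathbf{C}_{l_1+\kappa\hbar-1})$ from Eq.~\eqref{eq:lamda} to reduce every layer to $\mathbf{C}_1$, so the Hermite/Gershgorin step is applied only once, at the input layer.
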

Theorem~\ref{thm:smallest} provides a tight bound for the smallest eigenvalue of the NNGP$^{(d)}$ kernel. This nontrivial estimation mirrors the characteristics of this kernel, and usually be used as a key assumption for optimization and generalization. 

The key idea of proving Theorem~\ref{thm:smallest} is based on the following inequalities about the smallest eigenvalue of real-valued symmetric square matrices. Given two symmetric matrices $\mathbf{P}, \mathbf{Q}\in\mathbb{R}^{m \times m}$, it's observed that
\begin{equation} \label{eq:lamda}
	\left\{\begin{aligned}
		&\lambda_{\min}(\mathbf{P}\mathbf{Q}) \geq \lambda_{\min}(\mathbf{P}) \min_{i\in[m]} \mathbf{Q}(i,i) \ , \\
		&\lambda_{\min}(\mathbf{P}+\mathbf{Q}) \geq \lambda_{\min}(\mathbf{P}) + \lambda_{\min}(\mathbf{Q}) \ .
	\end{aligned} \right.
\end{equation}
From Eqs.~\eqref{eq:shortcut} and~\eqref{eqn:kernel}, we can unfold $\mathbf{K}(\boldsymbol{x}_i, \boldsymbol{x}_j)$ as a sum of covariance of the sequence of random variables $\{\boldsymbol{z}^{l_1+\kappa\hbar}\}$. Thus, we can bound $\lambda_{\min} \left( \mathbf{K}_{\mathcal{D},\mathcal{D}} \right)$ by $\mathrm{Cov}(\boldsymbol{z}^{l_1},\boldsymbol{z}^{l_1})$ via a chain of feedforward compositions in Eq.~\eqref{eq:forward}. For conciseness, we put the proof of Theorem \ref{thm:smallest} into Appendix B.

\section{EXPERIMENTS} \label{sec:experiments}
Generally, the depth can endow a network with a more powerful representation ability than the width. However, it is unclear whether or not the superiority of depth can sustain in the setting of NNGP, as all parameters are random rather than trained. In other words, it is unclear whether our established NNGP$^{(d)}$ is more expressive than NNGP$^{(w)}$. To answer this question, in this section, we apply the NNGP$^{(d)}$ kernel into the generic regression task and then compare its performance on the Fashion-MNIST (FMNIST) and CIFAR10 data sets with that of NNGP$^{(w)}$.

\textbf{NNGP$^{(d)}$ regression.} Provided the data set $\mathcal{D}=\{(\boldsymbol{x}_i,y_i)\}_{i=1}^N$, where $\boldsymbol{x}_i \in \mathbb{R}^{d \times 1}$ is the input, and $y_i \in \mathbb{R}$ is the corresponding label, our goal is to predict $y^*$ for the test sample $\boldsymbol{x}^*$. From Theorem~\ref{thm:existing}, $\boldsymbol{x}_i$ and $\boldsymbol{x}^*$ belong to a multivariate Gaussian process $\mathcal{N}(0,\mathbf{K}^*)$, whose mean equals to 0, and covariance matrix has the following form:
\begin{equation} \label{eq:kernel}
	\mathbf{K}^* = 
	\begin{bmatrix}
		\mathbf{K}_{\mathcal{D},\mathcal{D}} &  \mathbf{K}_{\boldsymbol{x}^*, \mathcal{D}}^{\top}\\
		\mathbf{K}_{\boldsymbol{x}^*, \mathcal{D}} &  \mathbf{K}_{\boldsymbol{x}^*,\boldsymbol{x}^*} \\
	\end{bmatrix} \ ,
\end{equation}
where $\mathbf{K}_{\mathcal{D},\mathcal{D}}$ is an $N \times N$ matrix computed by Eq.~\eqref{eqn:kernel}, and the $i$-th element of $\mathbf{K}_{\boldsymbol{x}^*, \mathcal{D}} \in \mathbb{R}^{1 \times N}$ is $\mathbf{K}(\boldsymbol{x}^*, \boldsymbol{x}_i)$ for $\boldsymbol{x}_i \in \mathcal{D}$. It's observed that Eq.~\eqref{eq:kernel} provides a division paradigm corresponding to the training set and test sample, respectively. Thus, we have $(\cdot\mid\mathcal{D},\boldsymbol{x}^*) \in \mathcal{N}(\mu^*,K^*)$ with
\begin{equation} \label{prediction}
	\left\{\begin{aligned}
		& \mu^* = \mathbf{K}_{\boldsymbol{x}^*, \mathcal{D}} \mathbf{K}_{\mathcal{D},\mathcal{D}} \boldsymbol{y}^{\top} ,\\
		& K^* = \mathbf{K}_{\boldsymbol{x}^*,\boldsymbol{x}^*} - \mathbf{K}_{\boldsymbol{x}^*, \mathcal{D}} \mathbf{K}_{\mathcal{D},\mathcal{D}}^{-1} \mathbf{K}_{\boldsymbol{x}^*, \mathcal{D}}^{\top} \ ,
	\end{aligned} \right.    
\end{equation}
where $\boldsymbol{y} = (y_1,y_2,\dots,y_n)$ denotes the label vector. When the observations are corrupted by the Gaussian additive noise of $\mathcal{N}(0, \eta^2)$, Eq.~\eqref{prediction} becomes
\begin{equation} \label{noised_prediction} 
	\left\{ \begin{aligned}
		& \mu^* = \mathbf{K}_{\boldsymbol{x}^*, \mathcal{D}} (\mathbf{K}_{\mathcal{D},\mathcal{D}}+\eta^2 \mathbf{I}_n) \boldsymbol{y}^{\top} \ ,\\
		& K^* = \mathbf{K}_{\boldsymbol{x}^*,\boldsymbol{x}^*} - \mathbf{K}_{\boldsymbol{x}^*, \mathcal{D}} (\mathbf{K}_{\mathcal{D},\mathcal{D}}+\eta^2 \mathbf{I}_n)^{-1} \mathbf{K}_{\boldsymbol{x}^*, \mathcal{D}}^{\top} \ ,
	\end{aligned} \right.   
\end{equation}
where $\mathbf{I}_n$ is the $n\times n$ identity matrix. For numerical implementation, we calculate the kernels as, for $\boldsymbol{x}_i, \boldsymbol{x}_j \in \mathcal{D}$, 
\begin{equation}
	\mathbf{K}(\boldsymbol{x}_i,\boldsymbol{x}_j) = \mathbb{E}[\langle g(\boldsymbol{x}_i; \boldsymbol{\theta}), g(\boldsymbol{x}_j; \boldsymbol{\theta})\rangle] \ , 
	\label{eqn:bothkernel}    
\end{equation}
where $g(\cdot; \boldsymbol{\theta})$ indicates the deep network or wide network. 

\textbf{Experimental setups.} We conduct regression experiments on FMNIST and CIFAR10 data sets. We respectively sample 1k, 2k, and 3k data from the training sets to construct two kernels and then test the performance of kernels on the test sets. Here, we employ a one-hidden-layer wide network to compute the NNGP$^{(w)}$ kernel, whereas the width of the deep network is set to the number of classes which is the smallest possible width for prediction tasks. For a fair comparison, the depth of NNGP$^{(d)}$ and the width of NNGP$^{(w)}$ are equally set to $200$ ($\hbar=1$).
For classification tasks, the class labels are encoded into an opposite regression formation, where incorrect classes are $-0.1$ and the correct class is $0.9$~\cite{lee2017deep}. For two networks, we employ $\tanh$ as the activation function. Following the setting of NNGP$^{(w)}$\cite{lee2017deep}, all weights are initialized with a Gaussian distribution of the mean $0$ and the variance of $0.3/n_{l}$ for normalization in each layer, where $n_{l}$ is the number of neurons in the $l$-th layer. 
The initialization is repeated $200$ times to compute the empirical statistics of the NNGP$^{(d)}$ and NNGP$^{(w)}$ based on Eq. \eqref{eqn:bothkernel}. We also run each experiment $5$ times for counting the mean and variance of accuracy. All experiments are conducted on Intel Core-i7-6500U.

\begin{table}[t]
	\caption{Test accuracy of regression experiments based on NNGP$^{(d)}$ and NNGP$^{(w)}$ kernels.}
	\begin{center}
		\resizebox{\linewidth}{13mm}{
			\begin{tabular}{c|c|c|c|c}
				\hline
				Model & FMNIST	     & Test accuracy & CIFAR10 & Test accuracy\\
				\hline
				\hline
				NNGP$^{(d)}$  & \multirow{2}{*}{1k} & \textbf{0.345$\pm$0.016} & \multirow{2}{*}{1k} & 0.166$\pm$0.018 \\
				NNGP$^{(w)}$ & & 0.342$\pm$0.021  & & \textbf{0.187$\pm$0.018} \\
				\hline
				NNGP$^{(d)}$ & \multirow{2}{*}{2k}   & 0.352$\pm$0.019 & \multirow{2}{*}{2k} & 0.178$\pm$0.007\\
				NNGP$^{(w)}$ &   &  \textbf{0.373$\pm$0.030} & & \textbf{0.188$\pm$0.012}  \\
				\hline
				NNGP$^{(d)}$  & \multirow{2}{*}{3k} & \textbf{0.372 $\pm$0.024}  & \multirow{2}{*}{3k} & 0.182$\pm$0.005 \\
				NNGP$^{(w)}$ & & 0.365$\pm$0.007 & & \textbf{0.185$\pm$0.019} \\
				\hline
			\end{tabular}
		}
	\end{center}
	\label{tab:nngp_exp}
	\vspace{-0.6cm}
\end{table}

\textbf{Results.} Table~\ref{tab:nngp_exp} lists the performance of the regression experimental results using NNGP$^{(d)}$ and NNGP$^{(w)}$ kernels. It is observed that the test accuracy of NNGP$^{(d)}$ and NNGP$^{(w)}$ kernels are comparable to each other, which implies that NNGP$^{(d)}$ and NNGP$^{(w)}$ kernels are similar to each other in representation ability. The reason may be that both NNGP$^{(d)}$ and NNGP$^{(w)}$ kernels are not stacked kernels. Their difference is mainly the aggregation of independent or weakly dependent variables. Thus, their ability should be similar~\cite{lee2017deep}.


Next, we use the angular plot to investigate how the separation $\hbar$ affects the representation ability of the NNGP$^{(d)}$ kernel. The angle is computed according to
\begin{equation*}
	\alpha = \arccos \left( \frac{\mathbf{K}(\boldsymbol{x}_1,\boldsymbol{x}_2)}{\sqrt{\mathbf{K}(\boldsymbol{x}_1,\boldsymbol{x}_1 ) \cdot \mathbf{K}(\boldsymbol{x}_2,\boldsymbol{x}_2)}} \right),
\end{equation*} 
and the angular plot manifests the relationship between kernel values and angles. If an angular plot comes near zero, the kernel cannot well recognize the difference between samples. Otherwise, the kernel is regarded to have a better discriminative ability. We set the network depth to $200\times\hbar$ so that the NNGP$^{(d)}$ kernel is empirically computed by aggregating $\kappa=200$ shortcut connections with a separation of $\hbar$ between neighboring shortcut connections. Figure~\ref{Figure_angular_form} illustrates the angularities of  NNGP$^{(d)}$ kernels with $\hbar = 1,3$ for FMNIST-1k training data. It is observed that the angular plot of the kernel with $\hbar = 3$ is compressed to be closer to zero relative to that of the kernel with $\hbar = 1$, which implies that a smaller separation $\hbar$ may induce a powerful NNGP$^{(d)}$ kernel.

To have a better understanding of the proposed NNGP$^{(d)}$ kernel, we explore the impacts of the separation $\hbar$, the number of samples, the parameter variance, and the network size on it, as well as the computation time of the kernel in Appendix C. We have shared all our code in \href{https://github.com/FengleiFan/NNGP_by_Depth}{link1} and \href{http://www.lamda.nju.edu.cn/zhangsq/}{link2}.

\begin{figure}[!htb]
	\centering
	\includegraphics[width=0.6\linewidth]{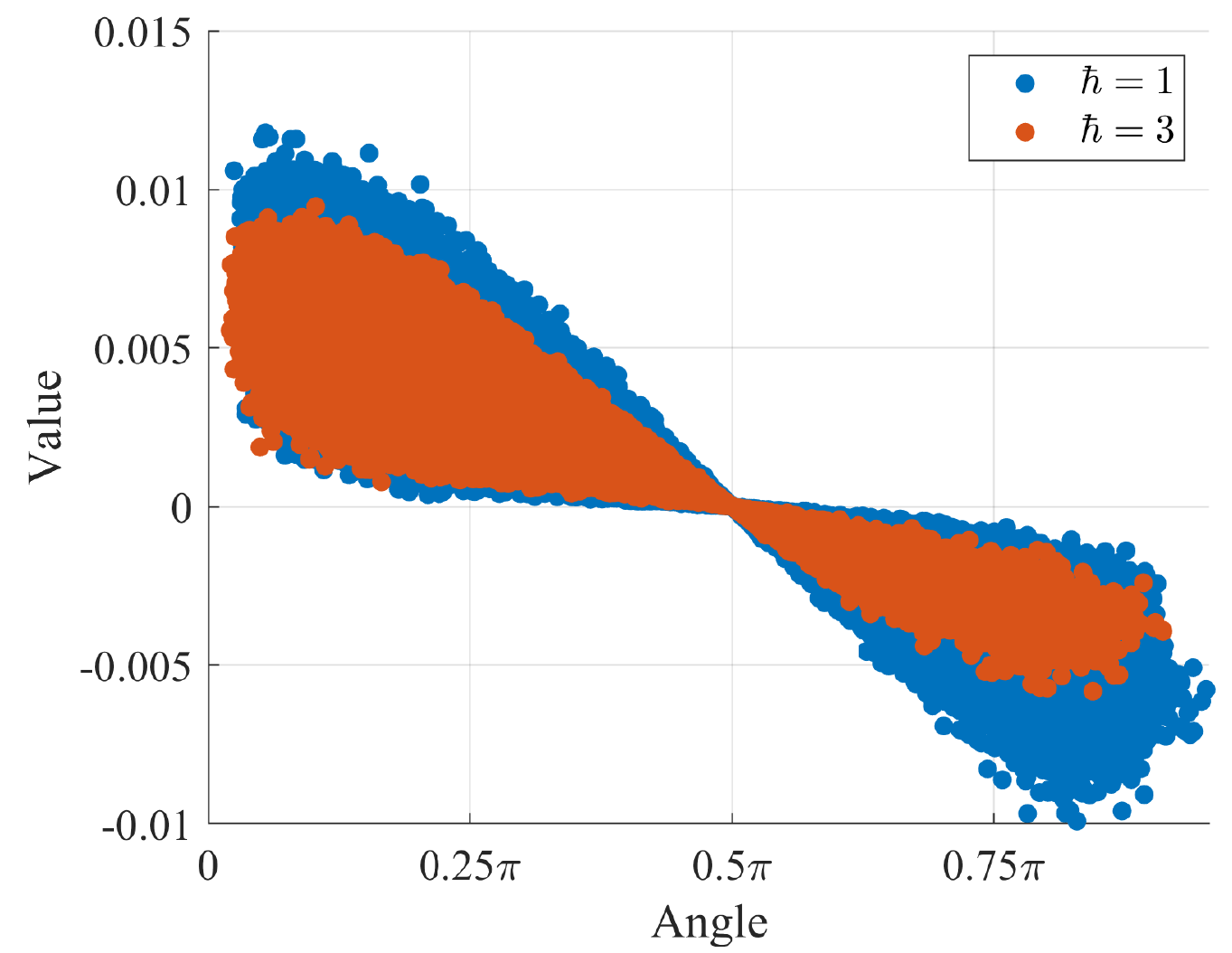}
	\caption{Angularities of NNGP$^{(d)}$ kernels with various $\hbar$.}
	\label{Figure_angular_form}
	\vspace{-0.5cm}
\end{figure}

				




\section{RELATED WORK} \label{sec:rw}

\textbf{Deep Learning and Kernel Methods.} There have been great efforts on correspondence between deep neural networks and Gaussian processes. Neal \textit{et al.} \cite{neal1996:priors} presented the seminal work by showing that a one-hidden-layer network of infinite width turns into a Gaussian process. Cho \textit{et al.} \cite{cho:MKMs} linked the multi-layer networks using rectified polynomial activation with compositional Gaussian kernels. Lee \textit{et al.} \cite{lee2017deep} showed that the infinitely wide fully-connected neural networks with commonly-used activation functions can converge to Gaussian processes. Recently, the NNGP has been scaled to many types of networks including Bayesian networks \cite{novak2018bayesian}, deep networks with convolution \cite{garriga2019:bayesian}, and recurrent networks \cite{yang2019:rnn}. Furthermore, Wang \textit{et al.} \cite{wang2020:bridging} wrote an inclusive review for studies on connecting neural networks and kernel learning. Despite great progress, all existing works about NNGP still rely on increasing width to induce the Gaussian processes, yet we go into the depth paradigm and offer an NNGP by increasing depth, which not only complements the existing theory to a good degree but also enhances our understanding to the true picture of ``deep'' learning.

\textbf{Developments of NNGPs.} Recent years have witnessed a growing interest in neural network Gaussian processes. NNGPs can provide a quantitative characterization of how likely certain outcomes are if some aspects of the system are not exactly known. In the experiments of \cite{lee2017deep}, an explicit estimate in the form of variance prediction is given to each test sample. Besides, Pang \textit{et al.} \cite{pang2019neural} showed that the NNGP is good at handling data with noise and is superior to discretizing differential operators in solving some linear or nonlinear partially differential equations. Park \textit{et al.} \cite{park2020towards} employed the NNGP kernel in the performance measurement of network architectures for the purpose of speeding up the neural architecture search. Dutordoir \textit{et al.} \cite{dutordoir2020bayesian} presented the translation insensitive convolutional kernel by relaxing the translation invariance of deep convolutional Gaussian processes. Lu \textit{et al.} \cite{lu2020interpretable} proposed an interpretable NNGP by approximating an NNGP with its low-order moments.

\section{Conclusions and Prospects} \label{sec:conclusions}
In this paper, we have presented the first depth-induced NNGP (NNGP$^{(d)}$) based on a width-depth symmetry consideration. Next, we have characterized the basic properties of the proposed NNGP$^{(d)}$ kernel by proving its uniform tightness and estimating its smallest eigenvalue, respectively. Such results serve as a solid base for the understanding and application of the derived NNGP, such as the generalization and optimization properties and Bayesian inference with the NNGP$^{(d)}$. Lastly, we have conducted regression experiments on image classification and showed that our proposed NNGP$^{(d)}$ kernel can achieve a performance comparable to the NNGP$^{(w)}$ kernel. Future efforts can be put into scaling the proposed NNGP$^{(d)}$ kernel into more applications.

\section{Acknowledgments}
Shaoqun Zhang would like to acknowledge the support from the Program B for Outstanding Ph.D Candidate of Nanjing University (202101B051). Dr. Fei Wang would like to acknowledge the support from Amazon AWS machine learning for research award and Google faculty research award.

\bibliographystyle{unsrt}
\bibliography{reference}
	

	


\section{Appendices}
In this appendix, for self-sufficiency, we will not only show proofs but also restate related theorems and notations. 

\subsection{Uniform Tightness of NNGP$^{(d)}$}

	\begin{lemma}[Lemma 5 in the manuscript] 
		Based on the notations in the manuscript, for any $\boldsymbol{x}, \boldsymbol{x}' \in \mathbb{R}^d$ and $s \in [t]$, there exist $\alpha, \beta, C >0$, such that 
		\[
		\mathbb{E} \left[ \sup_i \big| Z^s_i(\boldsymbol{x}) - Z^s_i(\boldsymbol{x}') \big|^{\alpha} \right] \leq C \| \boldsymbol{x} - \boldsymbol{x}' \|^{\beta+d} .
		\]
	\end{lemma}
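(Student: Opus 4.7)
The plan is to propagate a Lipschitz bound through the layers and exploit the stable-pertinent condition together with the independence of the weight matrices across layers. Since $\sigma$ is well-posed, it is $C_{\sigma}$-Lipschitz, and since biases cancel in the difference $\sigma(\mathbf{W}^l \boldsymbol{z}^{l-1} + \boldsymbol{b}^l) - \sigma(\mathbf{W}^l \boldsymbol{z}'^{l-1} + \boldsymbol{b}^l)$, submultiplicativity of the spectral norm gives the one-step contraction
\[
\big\|Z^l(\boldsymbol{x}) - Z^l(\boldsymbol{x}')\big\|_2 \leq C_{\sigma}\|\mathbf{W}^l\| \cdot \big\|Z^{l-1}(\boldsymbol{x}) - Z^{l-1}(\boldsymbol{x}')\big\|_2.
\]

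Iterating this inequality from $l=1$ up to $l=s$ with the boundary condition $Z^0 = \boldsymbol{x}$, I obtain
\[
\big\|Z^s(\boldsymbol{x}) - Z^s(\boldsymbol{x}')\big\|_2 \;\leq\; \prod_{l=1}^{s}\bigl(C_{\sigma}\|\mathbf{W}^l\|\bigr) \cdot \|\boldsymbol{x}-\boldsymbol{x}'\|_2.
\]
Using $\sup_i |Z^s_i(\boldsymbol{x}) - Z^s_i(\boldsymbol{x}')| \leq \|Z^s(\boldsymbol{x})-Z^s(\boldsymbol{x}')\|_2$, raising both sides to an exponent $\alpha>0$, taking expectations, and invoking independence of the $\mathbf{W}^l$ across layers, this becomes
\[
\mathbb{E}\!\left[\sup_i \big|Z^s_i(\boldsymbol{x}) - Z^s_i(\boldsymbol{x}')\big|^{\alpha}\right] \;\leq\; \|\boldsymbol{x}-\boldsymbol{x}'\|^{\alpha} \cdot \prod_{l=1}^{s} \mathbb{E}\!\left[\bigl(C_{\sigma}\|\mathbf{W}^l\|\bigr)^{\alpha}\right].
\]

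To close the argument I would set $\alpha = \beta + d$ for any fixed $\beta>0$ (the desired Kolmogorov exponent), and use the stable-pertinent hypothesis $C_{\sigma}\|\mathbf{W}^l\|<1$ to conclude that each factor in the product is at most $1$ (and strictly less than $1$ in the almost-sure bounded regime $C_{\sigma}\|\mathbf{W}^l\|\leq R < 1$ used in Lemma \ref{lemma:weak}). Thus the infinite product is bounded by a constant $C = \sup_{s}\prod_{l=1}^{s}\mathbb{E}[(C_{\sigma}\|\mathbf{W}^l\|)^{\alpha}] \leq 1$ that is independent of $s$, yielding exactly the claimed inequality with exponents $(\alpha,\beta)=(\beta+d,\beta)$.

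The main obstacle is guaranteeing that the product of $\alpha$-th moments of the layerwise spectral norms remains uniformly bounded in $s$, since under the Gaussian initialization $\mathcal{N}(0,\eta^{2})$ the spectral norm $\|\mathbf{W}^l\|$ is heavy-tailed in general. The cleanest route is to interpret $\mathbf{W}^l\in SP(\sigma)$ as an almost-sure bound (as implicit in Lemma \ref{lemma:weak}), in which case each factor is $\leq R^{\alpha}<1$ and the product in fact decays geometrically in $s$, giving a bound that is uniform in the layer index. If instead one wants to allow unbounded Gaussian weights, one would replace this step with a concentration estimate on $\|\mathbf{W}^l\|$ (e.g.\ Gordon's inequality) together with a choice of $\alpha$ small enough that $\mathbb{E}[(C_{\sigma}\|\mathbf{W}^l\|)^{\alpha}]<1$; the computation is routine but has to be handled with some care to still produce an exponent of the form $\beta+d$ with $\beta>0$.
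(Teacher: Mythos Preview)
Your argument is correct and is considerably more elementary than the route taken in the paper. The paper proceeds by introducing the characteristic function $\varphi(t)=e^{-\eta^{2}t^{2}/2}$ of the individual Gaussian parameters, writes the layer variance as $\sigma_{l}^{2}=\eta^{2}\bigl[1+n_{l}^{-1}\sum_{i}|\varphi\circ Z_{i}^{l-1}|^{2}\bigr]$, and runs an induction in which the one-step increment is controlled through the Lipschitz continuity of $\varphi$ rather than through that of the activation $\sigma$ directly. After $s$ iterations the paper obtains the constant $C_{\eta,\theta,\alpha}=\eta^{\alpha(s+1)}L_{\varphi}^{\alpha s}\,\mathbb{E}\bigl[|\mathcal{N}(0,1)|^{\alpha}\bigr]^{s+1}$, then picks any even $\alpha>d$ and sets $\beta=\alpha-d$.

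Your direct Lipschitz--spectral-norm contraction reaches the same conclusion with the same exponent structure $\alpha=\beta+d$ in essentially one line, and your product $\prod_{l\le s}\mathbb{E}\bigl[(C_{\sigma}\|\mathbf{W}^{l}\|)^{\alpha}\bigr]$ plays exactly the role of the paper's iterated constant. Note that both constants depend on $s$: the lemma is stated as ``for any $s\in[t]$ there exist $\alpha,\beta,C$'', so your worry about uniformity in $s$ is not strictly needed here, and the paper does not achieve it either. What your route buys is transparency---it shows immediately where the well-posed and stable-pertinent hypotheses enter---whereas the paper's characteristic-function detour ties the bound more explicitly to the Gaussian law of the parameters but at the cost of a more opaque computation.
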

	\begin{proof}
		This proof follows mathematical induction. Before that, we show the following preliminary result. Let $\theta$ be one element of the augmented matrix $(\mathbf{W}^l, \boldsymbol{b}^l)$ at the $l$-th layer, then we can formulate its characteristic function as
		\[
		\varphi(t) = \mathbb{E}\left[ e^{\mathrm{i}\theta t} \right] = e^{-\eta^2 t^2/2} \quad\text{with}\quad \theta \sim \mathcal{N}(0,\eta^2) ,
		\]
		where $\mathrm{i}$ denotes the imaginary unit with $\mathrm{i} = \sqrt{-1}$. Thus, the variance of hidden random variables at the $l^{th}$ layer becomes
		\begin{equation} \label{eq:sigma}
			\sigma^2_l = \eta^2 \left[ 1 + \frac{1}{n_l} \sum_{i=1}^{n_l} \big| \varphi \circ Z^{l-1}_i \big|^2 \right] .
		\end{equation}
		Since the activation $\sigma$ is a well-posed function and $(\mathbf{W}^l,\boldsymbol{b}^l) \in SP(\sigma)$, we affirm that $\varphi$ is Lipschitz continuous (with Lipschitz constant $L_{\varphi}$). 
		
		Now we start the mathematical induction. When $s=1$, for any $\boldsymbol{x}, \boldsymbol{x}' \in \mathbb{R}^d$ and $s \in [t]$, we have
		\[
		\mathbb{E} \left[ \sup_i \big| Z^1_i(\boldsymbol{x}) - Z^1_i(\boldsymbol{x}') \big|^{\alpha} \right] \leq C_{\eta,\theta,\alpha} \| \boldsymbol{x} - \boldsymbol{x}' \|^{\alpha} ,
		\]
		where $C_{\eta,\theta,\alpha} = \eta^{\alpha}~ \mathbb{E}[ |\mathcal{N}(0,1)|^{\alpha} ] $. Per mathematical induction, for $s \geq 1$, we have
		\[
		\mathbb{E} \left[ \sup_i \big| Z^s_i(\boldsymbol{x}) - Z^s_i(\boldsymbol{x}') \big|^{\alpha} \right] \leq C_{\eta,\theta,\alpha} \| \boldsymbol{x} - \boldsymbol{x}' \|^{\alpha} .
		\]
		Thus, one has
		\begin{multline} \label{eq:induction}
			\mathbb{E} \left[ \sup_i \big| Z^s_i(\boldsymbol{x}) - Z^s_i(\boldsymbol{x}') \big|^{\alpha} \right] \\
			\leq (C_{\sigma})^{\alpha}~ \mathbb{E}[ |\mathcal{N}(0,1)|^{\alpha} ]~  \big| Z^{s-1}_j(\boldsymbol{x}) - Z^{s-1}_j(\boldsymbol{x}') \big|^{\alpha} ,
		\end{multline}
		where
		\[
		\begin{aligned}
			C_{\sigma} &= \sigma^2_0(\boldsymbol{x}) - 2 \Sigma_{\boldsymbol{x},\boldsymbol{x}'} + \sigma^2_0(\boldsymbol{x}') \\
			=&~ \frac{\eta^2}{n_{s-1}} \sum_{j=1}^{n_{s-1}} \big| \varphi\circ Z_j^{s-1}(\boldsymbol{x}) - \varphi\circ Z_j^{s-1}(\boldsymbol{x}') \big|^2  \quad\text{(from Eq.~\eqref{eq:sigma})} \\
			\leq&~ \frac{\eta^2 L_{\varphi}^2}{n_{s-1}} \sum_{j=1}^{n_{s-1}} \big| Z_j^{s-1}(\boldsymbol{x}) - Z_j^{s-1}(\boldsymbol{x}') \big|^2 .
		\end{aligned}
		\]
		Thus, Eq.~\eqref{eq:induction} becomes
		\[
		\mathbb{E} \left[ \sup_i \big| Z^s_i(\boldsymbol{x}) - Z^s_i(\boldsymbol{x}') \big|^{\alpha} \right] \leq C_{\eta,\theta,\alpha}' \big| Z^{s-1}_j(\boldsymbol{x}) - Z^{s-1}_j(\boldsymbol{x}') \big|^{\alpha} ,
		\]
		where
		\[
		C_{\eta,\theta,\alpha}' = \frac{(\eta L_{\varphi})^{\alpha}}{n_{s-1}} \sum_{j=1}^{n_{s-1}} \big| Z_j^{s-1}(\boldsymbol{x}) - Z_j^{s-1}(\boldsymbol{x}') \big|^{\alpha}~ \mathbb{E}[ |\mathcal{N}(0,1)|^{\alpha} ] .
		\]
		Iterating this argument, we obtain
		\[
		\mathbb{E} \left[ \sup_i \big| Z^s_i(\boldsymbol{x}) - Z^s_i(\boldsymbol{x}') \big|^{\alpha} \right] \leq C_{\eta,\theta,\alpha} \| \boldsymbol{x} - \boldsymbol{x}' \|^{\alpha} ,
		\]
		where 
		\[
		C_{\eta,\theta,\alpha} = \eta^{\alpha (s+1)} L_{\varphi}^{\alpha s} ~ \mathbb{E}[ |\mathcal{N}(0,1)|^{\alpha} ]^{s+1} .
		\]
		The above induction holds for any positive even $\alpha$. Let $\beta = \alpha - d > 0$, then this lemma is proved as desired.
	\end{proof}
	
\subsection{Tight Bound for the Smallest Eigenvalue}

	\begin{theorem}[Theorem 3 in the manuscript] 
		Suppose that $\boldsymbol{x}_1, \dots, \boldsymbol{x}_N$ are i.i.d. sampled from $P_X = \mathcal{N}(0,\eta^2)$ and $P_X$ is a well-scaled distribution, then for an integer $r \geq 2$, with probability $1-\delta>0$, we have $\lambda_{\min} \left( \mathbf{K}_{\mathcal{D},\mathcal{D}} \right) = \Theta(d)$, where
		\[
		\delta \leq N e^{-\Omega(d)} + N^{2} e^{-\Omega\left(d N^{-2 /(r-0.5)}\right)} .
		\]
	\end{theorem}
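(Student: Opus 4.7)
The plan mirrors the road map sketched in the manuscript: unfold $\mathbf{K}_{\mathcal{D},\mathcal{D}}$ into a sum of shortcut-indexed covariance blocks, peel network layers off one at a time using the two inequalities in Eq.~\eqref{eq:lamda}, and close the argument with a concentration estimate on the Gram matrix of the raw inputs $\mathbf{X}\mathbf{X}^{\top}$. Since $\sigma$ is taken to be ReLU, positive homogeneity will be a central tool, letting me rewrite each nonlinearity as a diagonal $\{0,1\}$ activation-pattern matrix times a linear map.

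First I would substitute Eq.~\eqref{eq:shortcut} into the kernel definition Eq.~\eqref{eqn:kernel} and expand $\mathbf{K}(\boldsymbol{x}_i,\boldsymbol{x}_j)$ into a double sum over shortcut indices $\kappa,\kappa'$ of cross-covariances of the hidden representations $\boldsymbol{z}^{l_1+\kappa\hbar}$. The independence of weights across layers lets me bound the off-diagonal $\kappa\neq\kappa'$ blocks in a positive-semidefinite fashion, so the additive rule in Eq.~\eqref{eq:lamda} yields $\lambda_{\min}(\mathbf{K}_{\mathcal{D},\mathcal{D}}) \geq \tfrac{1}{M_{\boldsymbol{z}}}\sum_{\kappa} \lambda_{\min}\bigl(\mathrm{Cov}(\boldsymbol{z}^{l_1+\kappa\hbar},\boldsymbol{z}^{l_1+\kappa\hbar})\bigr)$. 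For each $\kappa$ I would then chain the forward recursion $\boldsymbol{z}^{l}=\sigma(\mathbf{W}^l\boldsymbol{z}^{l-1}+\boldsymbol{b}^l)$ all the way down to the input layer: writing the per-sample Gram matrix of $\boldsymbol{z}^l$ as a product involving the ReLU mask and $\mathbf{W}^l$, the multiplicative rule in Eq.~\eqref{eq:lamda} produces a telescoping bound $\lambda_{\min}\bigl(\mathrm{Cov}(\boldsymbol{z}^{l_1+\kappa\hbar})\bigr) \geq c_\kappa \cdot \lambda_{\min}(\mathbf{X}\mathbf{X}^{\top})$, where $c_\kappa > 0$ is a product of smallest singular values of the $\mathbf{W}^l$ and minimal row-masses of the activation patterns, both of which are controllable under the stable-pertinent hypothesis combined with standard Gaussian random-matrix concentration.

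The remaining step is to certify $\lambda_{\min}(\mathbf{X}\mathbf{X}^{\top}) = \Theta(d)$ with the advertised probability. The well-scaled assumption forces $\mathbb{E}\|\boldsymbol{x}_i\|^2 = \Theta(d)$, and chi-square concentration for a Gaussian vector gives $\|\boldsymbol{x}_i\|^2 = \Theta(d)$ for each fixed $i$ with failure probability $e^{-\Omega(d)}$, so a union bound over $i\in[N]$ explains the first term $Ne^{-\Omega(d)}$ in $\delta$. For the off-diagonal entries $\langle\boldsymbol{x}_i,\boldsymbol{x}_j\rangle$, a Hanson--Wright style estimate, or equivalently Markov applied to the $r$-th moment of the Gaussian bilinear form, produces a tail of the form $\exp(-\Omega(dN^{-2/(r-0.5)}))$ once a threshold is chosen to make all $N^2$ pairs simultaneously small, matching the second term of $\delta$. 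A Gershgorin argument then promotes this elementwise control to $\lambda_{\min}(\mathbf{X}\mathbf{X}^{\top}) = \Theta(d)$, and pushing this through the peeling chain yields the claimed $\lambda_{\min}(\mathbf{K}_{\mathcal{D},\mathcal{D}}) = \Theta(d)$.

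The main obstacle is preventing the constants $c_\kappa$ from collapsing as I peel layers. The multiplicative inequality in Eq.~\eqref{eq:lamda} is only useful when the activation pattern has a uniformly positive diagonal, which for ReLU demands an anti-concentration statement ruling out the possibility that the pre-activations are simultaneously non-positive for all $N$ samples. Reconciling this anti-concentration requirement with the spectral contraction $C_\sigma\|\mathbf{W}^l\|<1$ imposed by stable-pertinence is the delicate balance at the heart of the argument; it is also where the half-integer $r-0.5$ correction enters the probability bound, through a careful $r$-th moment bookkeeping of the Gaussian chaos that couples the ReLU pattern with the input bilinear form.
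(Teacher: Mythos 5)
Your outline matches the paper's skeleton in several places --- unfolding the kernel into shortcut-indexed blocks, using the additive and multiplicative eigenvalue inequalities of Eq.~\eqref{eq:lamda} to peel layers, concentrating $\|\boldsymbol{x}_i\|_2^2$ and the pairwise inner products, and finishing with Gershgorin --- but there is a genuine gap at the most important step. You telescope all the way down to $\lambda_{\min}(\mathbf{X}\mathbf{X}^{\top})$ and then try to certify $\lambda_{\min}(\mathbf{X}\mathbf{X}^{\top}) = \Theta(d)$. That quantity is identically zero whenever $N > d$, since $\mathbf{X}\mathbf{X}^{\top}$ is an $N \times N$ matrix of rank at most $d$; no concentration or Gershgorin argument can rescue a rank-deficient target, and the regime $N \gg d$ is exactly the one the theorem is meant to cover. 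The paper avoids this by \emph{not} linearizing the first layer: it expands the ReLU in Hermite polynomials, keeps the degree-$r$ coefficient $\mu_r(\sigma)$, and thereby lower-bounds $\lambda_{\min}(\mathbf{C}_1)$ by $\mu_r(\sigma)^2\,\lambda_{\min}\bigl(\mathbf{X}^{(r)}(\mathbf{X}^{(r)})^{\top}\bigr)$, where $\mathbf{X}^{(r)}$ is the $r$-th Khatri--Rao power of $\mathbf{X}$. This matrix has $d^{\,r}$ columns, so it can be full rank for $N \gg d$, and its off-diagonal entries are $\langle\boldsymbol{x}_i,\boldsymbol{x}_j\rangle^{r}$, which are polynomially smaller relative to the diagonal $\|\boldsymbol{x}_i\|_2^{2r}$ than in the $r=1$ case --- that is what makes the Gershgorin step survive a union bound over $N^2$ pairs. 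This also explains where the integer $r \geq 2$ in the statement actually lives: it is the Hermite expansion order of the nonlinearity, not (as in your sketch) merely the moment order used in a Hanson--Wright tail bound; in your framework $r$ has no role in the matrix being bounded, which is a sign the reduction has gone too far.

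Two smaller points. First, your concern about the constants $c_\kappa$ collapsing and the need for anti-concentration of the ReLU masks is legitimate, but it is downstream of the real problem above; the paper sidesteps it entirely by arguing the monotone chain $\lambda_{\min}(\mathbf{C}_{l}) \geq \lambda_{\min}(\mathbf{C}_{l-1})$ and only ever invoking the nonlinearity once, at the first layer, through its Hermite coefficients. Second, you only sketch the lower bound; the theorem claims $\Theta(d)$, and the paper's matching upper bound comes from the elementary inequality $\lambda_{\min}(\mathbf{K}_{\mathcal{D},\mathcal{D}}) \leq \tr(\mathbf{K}_{\mathcal{D},\mathcal{D}})/N$ together with the norm control on the hidden activations, a step your proposal omits.
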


We begin this proof with the following lemmas.
	\begin{lemma} \label{lemma:concentration}
		Let $f:\mathbb{R}^d \to \mathbb{R}$ be a Lipschitz continuous function with constant $L$ and $P_X$ denote the Gaussian distribution $\mathcal{N}(0,\eta^2)$, then for $\forall~ \delta>0$, there exists $c >0$, s.t.
		\begin{equation} \label{eq:log}
			\mathbb{P} \left( \left| f(\boldsymbol{x})-\int f \left(\boldsymbol{x}^{\prime}\right) \dif P_{X}\left(\boldsymbol{x}^{\prime}\right) \right| > \delta \right) \leq 2 e^\frac{-c \delta^{2}}{L^2} .
		\end{equation}
	\end{lemma}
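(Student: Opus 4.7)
My plan is to recognize this as the classical Gaussian concentration inequality for Lipschitz functions and prove it by the Herbst argument based on the logarithmic Sobolev inequality. First I would reduce to the standard Gaussian via the change of variables $\boldsymbol{x} = \eta \boldsymbol{g}$ with $\boldsymbol{g} \sim \mathcal{N}(\mathbf{0}, \mathbf{I}_d)$, so that $F(\boldsymbol{g}) := f(\eta \boldsymbol{g})$ is Lipschitz with constant $\eta L$. It then suffices to establish a sub-Gaussian concentration bound for $F$ under the standard Gaussian measure $\gamma_d$ and read off the constant from $L' = \eta L$.

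The key tool I would invoke is Gross's logarithmic Sobolev inequality, $\mathrm{Ent}_{\gamma_d}(g^2) \leq 2 \int \|\nabla g\|_2^2 \dif \gamma_d$, applied to $g = \exp(tF/2)$. Combined with the Lipschitz bound $\|\nabla F\|_2 \leq L'$ (valid almost everywhere by Rademacher's theorem), this yields, after the standard manipulation, a differential inequality for $\psi(t) := (1/t)\log \mathbb{E}[e^{t(F - \mathbb{E} F)}]$ that integrates by the Herbst trick to the sub-Gaussian moment-generating function bound $\mathbb{E}[e^{t(F - \mathbb{E} F)}] \leq \exp(t^2 L'^2 / 2)$ for every $t \in \mathbb{R}$.

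Finally, I would apply Markov's inequality in the Chernoff form $\mathbb{P}(F - \mathbb{E} F > \delta) \leq \exp(-t\delta + t^2 L'^2 / 2)$ and optimize over $t > 0$ by choosing $t = \delta / L'^2$, obtaining $\mathbb{P}(F - \mathbb{E} F > \delta) \leq \exp(-\delta^2 / (2 L'^2))$. Repeating the argument for $-F$ and combining the two one-sided tails by a union bound gives the stated two-sided estimate; substituting $L' = \eta L$ and absorbing the $\eta$-dependence into the advertised constant yields $c = 1/(2\eta^2)$ in the exponent, as claimed.

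The main obstacle is not conceptual but technical: Gross's inequality is stated for smooth test functions, whereas $F$ is merely Lipschitz, and $\exp(tF/2)$ need not have the integrability needed to invoke the inequality on the nose. I would handle this by a standard mollification of $F$ (convolution with a Gaussian of vanishing bandwidth), which produces smooth Lipschitz approximants with the same Lipschitz constant, and then pass to the limit with Rademacher's theorem and dominated convergence to carry the moment-generating function bound through. Once these measure-theoretic details are in place the derivation is routine, and the resulting constant depends only on $\eta$ and can be absorbed into $c$.
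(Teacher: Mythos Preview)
Your argument is correct and is exactly the classical route: reduce to the standard Gaussian, invoke Gross's log-Sobolev inequality, run the Herbst argument to get the sub-Gaussian MGF bound, then Chernoff plus a union bound; the mollification step to pass from smooth to Lipschitz test functions is the standard way to make this rigorous. The paper itself does not give a proof of this lemma at all: it simply remarks that the Gaussian distribution satisfies the log-Sobolev inequality with a dimension-free constant and cites \cite{nguyen2021:eigenvalues} for this fact (noting in passing that the same holds for the uniform distribution on the sphere or hypercube). So your write-up supplies the details the paper omits, and the underlying mechanism you use---log-Sobolev---is precisely the one the paper points to.
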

	Lemma~\ref{lemma:concentration} shows that the Gaussian distribution corresponding to our samples satisfies the log-Sobolev inequality (i.e., Eq.~\eqref{eq:log}) with some constants unrelated to dimension $d$. This result also holds for the uniform distributions on the sphere or unit hypercube~\cite{nguyen2021:eigenvalues}. 
	
	\begin{lemma} \label{lemma:input}
		Suppose that $\boldsymbol{x}_1, \dots, \boldsymbol{x}_N$ are i.i.d. sampled from $\mathcal{N}(0,\eta^2)$, then with probability $1-\delta>0$, we have 
		\[
		\|\boldsymbol{x}_i\|_2 = \Theta(\sqrt{d}) \quad \text{and}\quad |\langle \boldsymbol{x}_i,\boldsymbol{x}_j \rangle|^r \leq dN^{-1/(r-0.5)}, 
		\]
		for $i \neq j$, where
		\[
		\delta \leq N e^{-\Omega(d)} + N^{2} e^{-\Omega\left(d N^{-2 /(r-0.5)}\right)} .
		\]
	\end{lemma}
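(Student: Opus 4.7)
The strategy is to apply the Gaussian log-Sobolev concentration of Lemma~\ref{lemma:concentration} twice---once to $\boldsymbol{x}\mapsto\|\boldsymbol{x}\|_2$, and once to the linear functional $\boldsymbol{x}_i\mapsto\langle\boldsymbol{x}_i,\boldsymbol{x}_j\rangle$ after conditioning on $\boldsymbol{x}_j$---and then to union-bound over $N$ indices and over the $O(N^{2})$ ordered pairs, respectively. The two resulting contributions to the failure probability assemble into the two terms of $\delta$.

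\textbf{Step~1 (norm bound).} The map $f(\boldsymbol{x})=\|\boldsymbol{x}\|_2$ is $1$-Lipschitz, and the well-scaled hypothesis of Definition~\ref{def:inputs} guarantees $\mathbb{E}\|\boldsymbol{x}_i\|_2=\Theta(\sqrt{d})$. Applying Lemma~\ref{lemma:concentration} with $L=1$ and a deviation threshold of order $\sqrt{d}$ gives
\[
\mathbb{P}\bigl(\bigl|\|\boldsymbol{x}_i\|_2-\mathbb{E}\|\boldsymbol{x}_i\|_2\bigr|>\epsilon\sqrt{d}\bigr)\le 2e^{-c\epsilon^{2}d}
\]
for any small fixed $\epsilon>0$. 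On the complementary event one has $\|\boldsymbol{x}_i\|_2=\Theta(\sqrt{d})$, and a union bound over $i\in[N]$ supplies the $Ne^{-\Omega(d)}$ contribution to $\delta$.

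\textbf{Step~2 (inner-product bound).} Fix an ordered pair $i\neq j$ and condition on $\boldsymbol{x}_j$ lying in the good event of Step~1, so that $\|\boldsymbol{x}_j\|_2=\Theta(\sqrt{d})$. The map $\boldsymbol{x}_i\mapsto\langle\boldsymbol{x}_i,\boldsymbol{x}_j\rangle$ is then linear, hence Lipschitz in $\boldsymbol{x}_i$ with constant $\|\boldsymbol{x}_j\|_2$, and its conditional mean is $\langle\mathbb{E}\boldsymbol{x}_i,\boldsymbol{x}_j\rangle=0$ by Definition~\ref{def:inputs}. A second invocation of Lemma~\ref{lemma:concentration} then yields
\[
\mathbb{P}\bigl(|\langle\boldsymbol{x}_i,\boldsymbol{x}_j\rangle|>t\mid\boldsymbol{x}_j\bigr)\le 2\exp\!\bigl(-c\,t^{2}/\|\boldsymbol{x}_j\|_2^{2}\bigr)\le 2\exp\!\bigl(-c'\,t^{2}/d\bigr).
\]
Selecting the threshold $t$ at the scale dictated by the right-hand side of the target inequality, and then raising both sides to the $r$-th power, produces a per-pair failure of order $\exp\!\bigl(-\Omega(dN^{-2/(r-0.5)})\bigr)$. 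A union bound over the at most $N^{2}$ ordered pairs furnishes the second term in $\delta$.

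\textbf{Main obstacle.} The conceptual framework is the routine pattern of ``Lipschitz concentration plus union bound,'' so the real work is the bookkeeping in Step~2: (i)~absorbing the Step~1 failure event into the conditioning on $\boldsymbol{x}_j$ via the tower property, so that it is not double-counted; and (ii)~calibrating $t$ so that, after the $r$-th power, the bound $dN^{-1/(r-0.5)}$ and the exponent $dN^{-2/(r-0.5)}$ in the failure probability line up simultaneously with the stated dependence on $r$. The zero-mean assumption in Definition~\ref{def:inputs} is what keeps the concentration centered at the origin and lets the inner-product bound take the clean form stated; without it, one would only control deviations around $\langle\mathbb{E}\boldsymbol{x}_i,\boldsymbol{x}_j\rangle$, which need not vanish.
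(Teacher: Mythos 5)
Your proposal follows essentially the same route as the paper's proof: both apply the Lipschitz concentration of Lemma~\ref{lemma:concentration} once to the norm (union bound over $N$ samples, giving the $Ne^{-\Omega(d)}$ term) and once to $\langle\boldsymbol{x}_i,\cdot\rangle$ with Lipschitz constant $\mathcal{O}(\sqrt{d})$ after fixing one sample (union bound over $N^2$ pairs, giving the second term), then choose the threshold to match $dN^{-1/(r-0.5)}$. Your Step~2 is in fact slightly more careful than the paper's (explicit conditioning via the tower property and explicit use of the zero-mean hypothesis), and the calibration of $t$ against the $r$-th power that you flag as the main obstacle is precisely the point the paper itself treats most loosely.
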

	\begin{proof}
		From Definition 1 of the manuscript, we have
		\[
		\int\|\boldsymbol{x}\|_{2}^{2} \dif P_X(\boldsymbol{x}) = \Theta(d).
		\]
		Since $\boldsymbol{x}_1, \dots, \boldsymbol{x}_n$ are i.i.d. sampled from $P_X = \mathcal{N}(0,\eta^2)$, for $\forall$ $i\in[N]$, we have $\|\boldsymbol{x}_i\|_2^2 = \Theta(d)$ with probability at least $1- Ne^{\Omega(d)}$. Provided $\boldsymbol{x}_i$, the single-sided inner product $\langle \boldsymbol{x}_i,\cdot \rangle$ is Lipschitz continuous with the constant $L = \mathcal{O}(\sqrt{d})$. As such, from Lemma~\ref{lemma:concentration}, for $\forall~ j\neq i$, we have
		\[
		\mathbb{P} \left( |\langle \boldsymbol{x}_i,\boldsymbol{x}_j \rangle| > \delta^* \right) \leq 2 e^{-\delta^2/L^2} .
		\]
		Then, for $r \geq 2$, we have
		\[
		\mathbb{P} \left( \max_{j\neq i} |\langle \boldsymbol{x}_i,\boldsymbol{x}_j \rangle|^r > \delta^* \right) \leq N^{2} e^{-\Omega\left( {\delta^*}^2\right)} .
		\]
		We complete the proof by setting $\delta^* \leq dN^{-1/(r-0.5)}$.
	\end{proof}

	\noindent\emph{\textbf{Proof of Theorem~\ref{thm:smallest}.}} We start this proof with some notations. Recall the empirical NNGP$^{(d)}$ kernel $\mathbf{K}_{\mathcal{D},\mathcal{D}}$. For convenience, we force $n^* =|\boldsymbol{z}_1|_{\#} = |\boldsymbol{z}_2|_{\#} = \dots = |\boldsymbol{z}_L|_{\#}$. We also abbreviate the covariance $\mathrm{Cov}(\boldsymbol{z}^{l_1+\kappa\hbar},\boldsymbol{z}^{l_1+\kappa\hbar})$ as $\mathbf{C}_{l_1+\kappa\hbar}$ and pick $l_1 = 1$ throughout this proof.
	
	Unfolding the NNGP$^{(d)}$ kernel equation 
		\begin{equation}
		\mathbf{K}(\boldsymbol{x}_i,\boldsymbol{x}_j) = \mathbb{E}[\langle f(\boldsymbol{x}_i; \boldsymbol{\theta}), f(\boldsymbol{x}_j; \boldsymbol{\theta})\rangle], \quad\text{for}\quad \boldsymbol{x}_i, \boldsymbol{x}_j \in \mathcal{D},
	\end{equation}
	we have 
	\begin{equation} \label{eq:K_cov}
		\mathbf{K}(\boldsymbol{x}_i, \boldsymbol{x}_j) = \frac{1}{M_{\boldsymbol{z}}} \left[ \sum_{\kappa} \varphi_{\kappa} + \sum_{\kappa_1 \neq \kappa_2} \phi_{\kappa_1,\kappa_2} \right] ,
	\end{equation}
	where
	\[
	\left\{\begin{aligned}
		&\varphi_{\kappa} = \mathbb{E} \left[ \langle \boldsymbol{z}^{l_1+\kappa\hbar}, \boldsymbol{z}^{l_1+\kappa\hbar} \rangle \right] , \\
		&\phi_{\kappa_1,\kappa_2} = \sum\nolimits_{p,q} \mathbb{E} \left[ \boldsymbol{z}_p^{l_1+\kappa_1\hbar} \boldsymbol{z}_q^{l_1+\kappa_2\hbar} \right], \quad\text{for}\quad \kappa_1 \neq \kappa_2 ,
	\end{aligned} \right.
	\]
	in which the subscript $p$ indicates the $p$-th element of vector $\boldsymbol{z}^{l_1+\kappa_1\hbar}$. From Theorem 1 of the manuscript, the sequence of random variables $\{ \boldsymbol{z}^{l_1}, \boldsymbol{z}^{l_1+\hbar}, \dots, \boldsymbol{z}^{l_1+\kappa\hbar} \}$ is weakly dependent with $\beta(s) \to \infty$ as $s\to \infty$. Thus, $\phi_{\kappa_1,\kappa_2}$ is an infinitesimal with respect to $l_1+|\kappa_2-\kappa_1|\hbar$ when $\kappa_1 \neq \kappa_2$ and $\hbar$ is sufficiently large.
	
	\noindent Invoking the following equations
		\begin{equation} 
		\left\{\begin{aligned}
			&\lambda_{\min}(\mathbf{P}\mathbf{Q}) \geq \lambda_{\min}(\mathbf{P}) \min_{i\in[m]} \mathbf{Q}(i,i) \ , \\
			&\lambda_{\min}(\mathbf{P}+\mathbf{Q}) \geq \lambda_{\min}(\mathbf{P}) + \lambda_{\min}(\mathbf{Q}) 
		\end{aligned} \right.
	\end{equation}
	into Eq.~\eqref{eq:K_cov}, we have
	\begin{align}
		\begin{split} \label{eq:thm2_1}
			&\lambda_{\min}(\mathbf{K}_{\mathcal{D},\mathcal{D}}) \geq \sum\nolimits_{\kappa} \lambda_{\min} \left( \mathbf{C}_{l_1+\kappa\hbar} \right),
		\end{split}\\
		\begin{split} \label{eq:thm2_2}
			&\lambda_{\min} \left( \mathbf{C}_{l_1+\kappa\hbar}\right) \geq \lambda_{\min} \left( \mathbf{C}_{l_1+\kappa\hbar-1} \right), \quad\text{for}\quad \kappa \in \mathbb{N}.
		\end{split}
	\end{align}
	Iterating Eq.~\eqref{eq:thm2_2} and then invoking it into Eq.~\eqref{eq:thm2_1}, we have
	\begin{equation} \label{eq:thm2_3}
		\lambda_{\min}(\mathbf{K}_{\mathcal{D},\mathcal{D}}) \geq \sum\nolimits_{\kappa} \lambda_{\min}\left( \mathbf{C}_1 \right) .
	\end{equation}
	From the Hermite expansion~\cite{zhang2021:arise} of ReLU function, we have
	\begin{equation} \label{eq:relu}
		\mu_{r}(\sigma) = (-1)^{\frac{r-2}{2}} (r-3)!! / \sqrt{2 \pi r!} \ ,
	\end{equation}
	where $r \geq 2$ indicates the expansion order. Thus, we have
	\begin{equation} \label{eq:thm2_4}
		\begin{aligned}
			& \lambda_{\min}\left( \mathbf{C}_1 \right) = \lambda_{\min}\left( \sigma(\mathbf{W}^1\mathbf{X}) \sigma(\mathbf{W}^1\mathbf{X})^{\top} \right) \\
			&\geq \mu_{r}(\sigma)^2 \lambda_{\min}\left( \mathbf{X}^{(r)} \left( \mathbf{X}^{(r)} \right)^{\top} \right) \\
			&\geq \mu_{r}(\sigma)^2 \left( \min_{i\in[N]} \|\boldsymbol{x}_i\|_2^{2r} - (N-1) \max_{j\neq i} |\langle \boldsymbol{x}_i,\boldsymbol{x}_j \rangle|^r \right) \\
			&\geq \mu_{r}(\sigma)^2 \Omega(d) \ ,
		\end{aligned}
	\end{equation}
	where the superscript $(r)$ denotes the $r$-th Khatri Rao power of the matrix $\mathbf{X}$, the first inequality follows from Eq.~\eqref{eq:relu}, the second one holds from Gershgorin Circle Theorem~\cite{salas1999gershgorin}, and the third one follows from Lemma~\ref{lemma:input}. Therefore, we can obtain the lower bound of the smallest eigenvalue by plugging Eq.~\eqref{eq:thm2_4} into Eq.~\eqref{eq:thm2_3}
	
	On the other hand, it's observed from Lemma 1 of the manuscript that for $l \in [L]$,
	\begin{equation} \label{eq:thm2_5}
		\left\{\begin{aligned}
			& \| \boldsymbol{z}_p^l \|^2_2 = \mathbb{E}_{\mathbf{W}^l_p} \left[ \sigma(\mathbf{W}^l_p\boldsymbol{z}^{l-1})^2 \right] = \| \boldsymbol{z}_q^l \|^2, \quad\text{for}\quad \forall q \neq p ,\\
			&\| \boldsymbol{z}^l \|_2^2 = \mathbb{E}_{\mathbf{W}^l} \left[ \sigma(\mathbf{W}^l\boldsymbol{z}^{l-1})^2 \right] \leq \| \boldsymbol{z}^l \|_2^2. 
		\end{aligned}\right.
	\end{equation}
	Thus, we have
	\[
	\begin{aligned}
		\lambda_{\min}(\mathbf{K}_{\mathcal{D},\mathcal{D}}) &\leq \frac{\tr(\mathbf{K}_{\mathcal{D},\mathcal{D}})}{N} = \frac{1}{N} \sum_{i}^{N} \mathbf{K}(\boldsymbol{x}_i, \boldsymbol{x}_i) \\
		& \leq  \frac{1}{N} \sum_{i}^{N} \frac{1}{M_{\boldsymbol{z}}} \left[ \sum_{\kappa} \varphi_{\kappa} + \sum_{\kappa_1 \neq \kappa_2} \phi_{\kappa_1,\kappa_2} \right] \\
		&\leq \frac{1}{N} \sum_{i}^{N}\left( \frac{1}{\kappa} \sum_{\kappa} \max_{j\in[N]} \|\boldsymbol{x}_j\|_2^2 + \Omega(d) \right) \\
		&\leq \Theta(d) ,
	\end{aligned}
	\]
	where the second inequality follows from Eq.~\eqref{eq:K_cov}, the third one follows from Eq.~\eqref{eq:thm2_5}, and the fourth one holds from Lemma~\ref{lemma:input}. This completes the proof. $\hfill\square$
	
\subsection{Analysis Experiments}

\ff{To have a better understanding of the proposed NNGP$^{(d)}$ kernel, here we explore the impacts of the separation $\hbar$, the number of samples, the parameter variance, and the network size on it, as well as the computation time. Now we introduce them one by one.}

\begin{table}[th]
	\caption{\ff{Test accuracy on the FMNIST test data by the NNGP$^{(d)}$ kernel induced with different $\hbar$.}}
	\begin{center}
			\ff{\begin{tabular}{|c|c|}
					\hline
					 $\hbar$	     & Test accuracy \\
					\hline
					 1 & 0.329$\pm$0.027 \\
					\hline
					 2 & 0.198$\pm$0.027\\
					\hline
					 3 &  0.179$\pm$0.022\\	
					\hline					 4 & 0.145$\pm$0.014 \\		
					\hline
					 5 &  0.146$\pm$0.017 \\
					\hline
				\end{tabular}}
		\end{center}
		\label{tab:hbar}
	\end{table}	
\ff{\textbf{The impact of $\hbar$.} We set the network depth to $200\times\hbar$ so that the NNGP$^{(d)}$ kernel is empirically computed by aggregating $\kappa=200$ shortcut connections with a separation of $\hbar$. For a comprehensive comparison, $\hbar$ is selected from $\{1,2,3,4,5\}$. Next, the NNGP$^{(d)}$ kernels are constructed with these networks and FMNIST-4k training data. All parameters are initialized with a mean of $0$ and a variance of $0.5$. Table \ref{tab:hbar} demonstrates the testing performance of so-built NNGP$^{(d)}$ kernels with respect to the FMNIST test data. As suggested by our angular plot analysis in the main body, the kernel with a larger $\hbar$ is compressed to be closer to zero relative to the kernel with a lower $\hbar$. Correspondingly, the kernel with a larger $\hbar$ should have lower discriminative ability. Table \ref{tab:hbar} shows that a larger $\hbar$ leads to an inferior test accuracy, which agrees with our analysis. We conclude that the separation $\hbar$ should be set to a smaller number to make a powerful NNGP$^{(d)}$ kernel.}

\ff{\textbf{The impact of number of samples.} Here we investigate the impact of the number of training samples on the model's performance. We still conduct regression experiments on FMNIST and CIFAR-10 data sets. Following the configurations in the main body, we respectively sample 1k, 2k, 3k, 4k, 5k data from the training sets to construct NNGP$^{(d)}$ and NNGP$^{(w)}$ kernels. Figure \ref{Figure_number_sample} shows the testing accuracy and its associated error bars of two kernels on FMNIST and CIFAR-10. Regarding the NNGP$^{(d)}$ kernel, its test accuracy culminates at 3k for both datasets. While for NNGP$^{(w)}$ kernel, the maximum accuracy is reached at 2k and 3k for FMNIST and CIFAR-10, respectively. We conclude that NNGP$^{(w)}$ and NNGP$^{(d)}$ kernels have a similar performance-sample behavior. }	

\begin{figure}[ht]
	\centering
	\includegraphics[width=\linewidth]{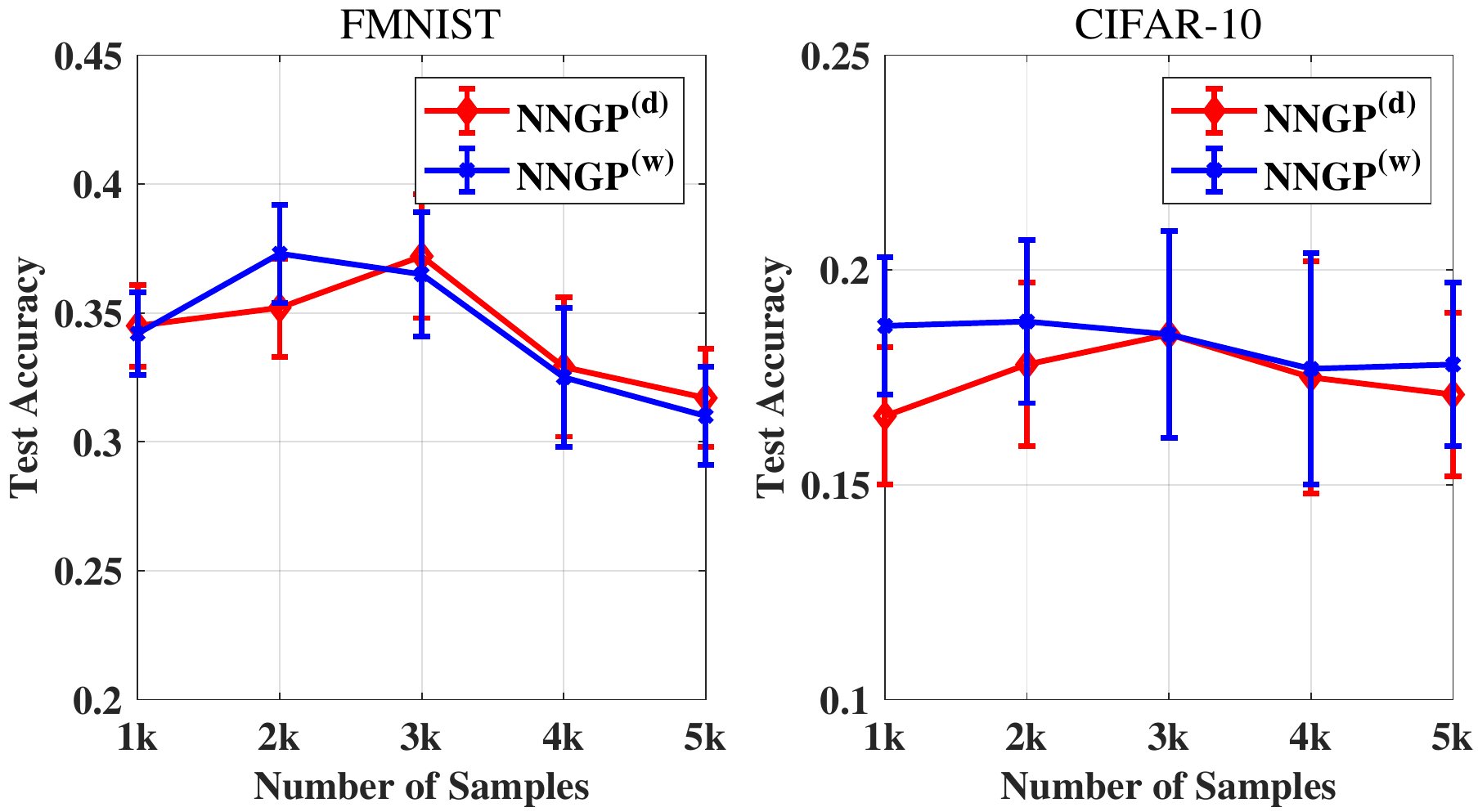}
	\caption{The performance of the NNGP$^{(d)}$ and NNGP$^{(w)}$ kernels constructed by different number of samples on FMNIST and CIFAR-10.}
	\label{Figure_number_sample}
\end{figure}

\begin{table*}[htb]
 \centering
\caption{\ff{The MSE scores of NNGP$^{(d)}$ and NNGP$^{(w)}$ kernels with respect to different variances and network sizes on the synthetic dataset.}}
\scalebox{0.7}{
 \ff{\begin{tabular}{ |c|c|c|c|c|c|c|c|c|c|  }
 \hline
\multirow{3}{4em}{NNGP$^{(w)}$}  & \multicolumn{3}{c|}{width=500}  & \multicolumn{3}{c|}{width=1000} &  \multicolumn{3}{c|}{width=2000} \\
\cline{2-10}
 & $\sigma=0.3$ & $\sigma=0.5$ & $\sigma=0.8$ & $\sigma=0.3$ & $\sigma=0.5$ & $\sigma=0.8$ & $\sigma=0.3$ & $\sigma=0.5$ & $\sigma=0.8$   \\
\cline{2-10}
   & 0.1240$\pm$0.0279 &0.1051$\pm$0.0317 & 0.1143$\pm$0.0208 & 0.1350$\pm$0.0243 & 0.1092$\pm$0.0438 &0.1075$\pm$0.0389& 0.1255$\pm$ 0.0158& 0.0920$\pm$ 0.0444& 0.1006$\pm$0.0275 \\
  \hline
\multirow{3}{4em}{NNGP$^{(d)}$}  & \multicolumn{3}{c|}{depth=100}  & \multicolumn{3}{c|}{depth=200} &  \multicolumn{3}{c|}{depth=500} \\
\cline{2-10}
 & $\sigma=0.2$ & $\sigma=0.3$ & $\sigma=0.5$ & $\sigma=0.2$ & $\sigma=0.3$ & $\sigma=0.5$ & $\sigma=0.2$ & $\sigma=0.3$ & $\sigma=0.5$   \\
\cline{2-10}
  & 0.1437$\pm$ 0.0217& 0.1808
    $\pm$0.0443 &  0.2184$\pm$0.0188 & 0.1310$\pm$0.0635 &  0.1926$\pm$0.0436 & 0.1742$\pm$0.0442 &     0.0917$\pm$0.0391 & 0.2056$\pm$0.0304 &0.2384$\pm$0.1308\\
  \hline
   \end{tabular}}}
 \vspace{1ex}
  \label{tab:sinc}
\end{table*}

\ff{\textbf{The impact of parameter variance and network size.} We construct a synthetic data set to investigate the impact of variance and network size on the model's performance. The task is to use the NNGP$^{(w)}$ and NNGP$^{(d)}$ kernels to fit a function: $f(x)=\sin(x)$ over $[0,\pi]$. A total of 200 data points are evenly sampled from $[0,\pi]$, from which 100 points are randomly sampled for training and the rest for testing.}

\ff{Similarly, we employ a one-hidden-layer wide network for computing the NNGP$^{(w)}$ kernel whose width is cast from $\{500,1000,2000\}$. In contrast, we use a deep network for the NNGP$^{(d)}$ kernel whose depth is cast from $\{100, 200, 500\}$. The width of the deep network is set to 30, and $\hbar=1$. No label encoding is needed here because this is not a classification task. For two networks, we take $\tanh$ as the activation function. For the NNGP$^{(w)}$, all weights are initialized with a Gaussian distribution of mean $0$ and variance of $\{0.3, 0.5, 0.8\}/n_{l}$, where $n_{l}$ is the number of neurons in the $l$-th layer. For NNGP$^{(d)}$, all weights are initialized with a Gaussian distribution of mean $0$ and variance of $\{0.2, 0.3, 0.5\}$. The initialization is repeated $200$ times to compute the empirical statistics of the NNGP$^{(d)}$ and NNGP$^{(w)}$. We run each model $10$ times to count the mean and variance of accuracy. All experiments are conducted on an NVIDIA TITAN Xp GPU. }

\begin{figure}[h]
	\centering
	\includegraphics[width=\linewidth]{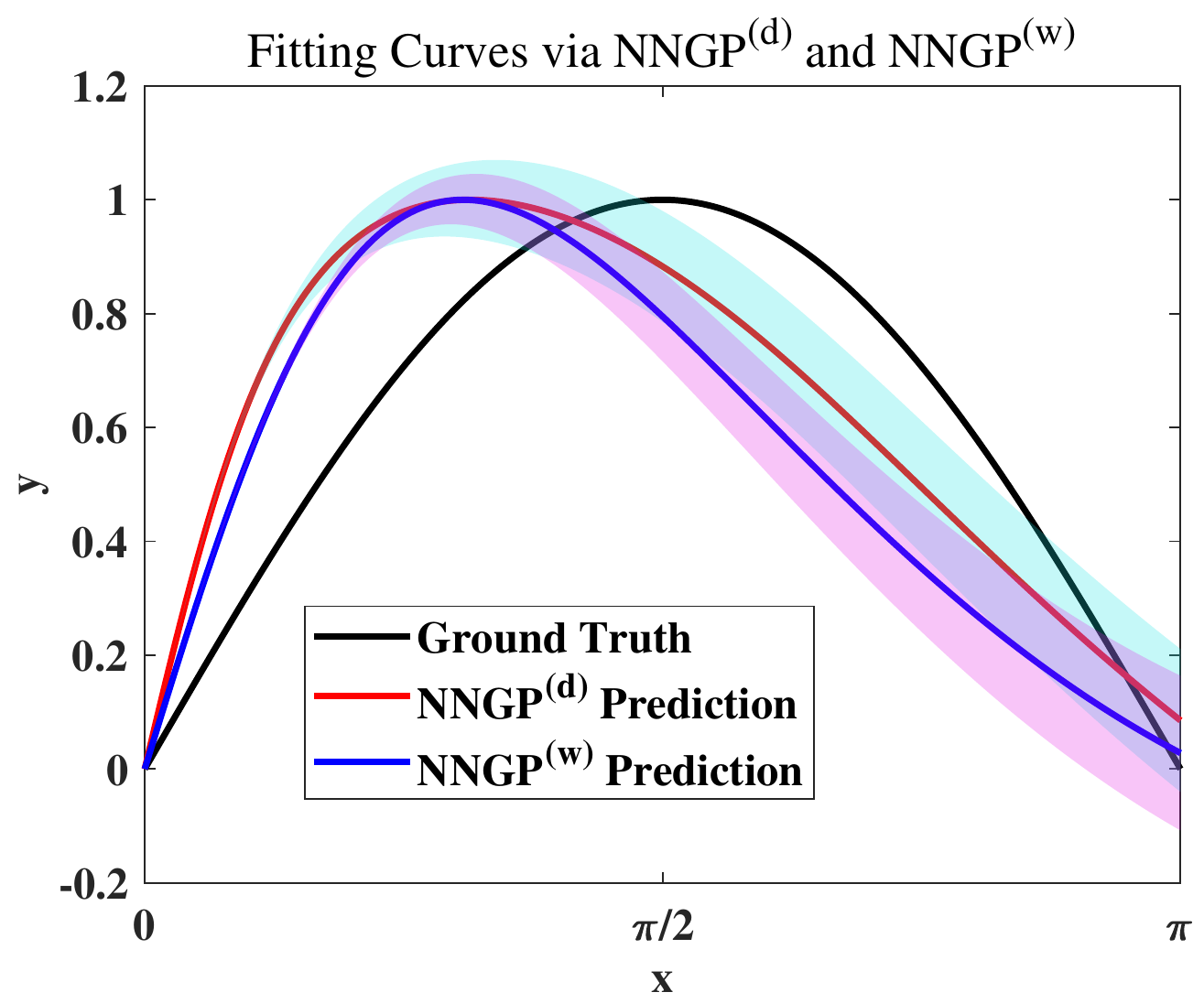}
	\caption{The fitting curves of the NNGP$^{(d)}$ and NNGP$^{(w)}$ kernels for a sine function over $[0,\pi]$.}
	\label{Figure_fitting}
\end{figure}

\ff{Table \ref{tab:sinc} shows the performance of NNGP$^{(d)}$ and NNGP$^{(w)}$ kernels with respect to different variances and network sizes, from which we draw two highlights. The first is that with the same network size, the NNGP$^{(d)}$ kernel favors a lower variance, while the NNGP$^{(w)}$ kernel is on the contrary. The second one is that increasing the network size may not necessarily give rise to a lower MSE. In fact, it depends on the variance. For the NNGP$^{(d)}$ kernel, when $\sigma=0.2$, increasing the network depth is beneficial, whereas when $\sigma=0.3$, increasing the network depth hurts the performance. Figure \ref{Figure_fitting} presents the fitting curves of the NNGP$^{(d)}$ ($\sigma=0.2$, depth=200) and NNGP$^{(w)}$ ($\sigma=0.5$, width=1000) kernels for $\sin(x)$, where the curve of NNGP$^{(w)}$ is more accurate in $[0,\pi/2]$ and the curve of NNGP$^{(d)}$ is more accurate in $[\pi/2, \pi]$.}

\begin{table}[h]
	\caption{\ff{The computation time in constructing the NNGP$^{(d)}$ and NNGP$^{(w)}$ kernels.}}
	\begin{center}
			\ff{\begin{tabular}{|c|c|c|}
					\hline
					 \#Sample	   & NNGP$^{(d)}$ &  NNGP$^{(w)}$\\
					\hline
					 1k & 0.011s & 0.057s  \\
					 2k & 0.173s & 0.083s \\
					 3k & 0.228s & 0.160s \\
					 4k & 0.286s & 0.226s \\
					 5k & 0.444s & 0.342s \\
					\hline
				\end{tabular}}
		\end{center}
		\label{tab:time}
	\end{table}	

\ff{\textbf{Computation time.} Here, we also compare the time spent on constructing the NNGP$^{(d)}$ and NNGP$^{(w)}$ kernels relative to different numbers of samples. We sample the data from FMNIST. The network size is $200$ for both deep and wide networks. $\hbar$ is $1$ for the deep network. Previously, we repeat the initialization 200 times to compute a kernel. Here, the repetition time is set to 1 for convenience. The experiment is conducted on Intel Core-i7-6500U. As shown in Table \ref{tab:time}, generally, it is more expensive to construct the NNGP$^{(d)}$ kernel than the NNGP$^{(w)}$ kernel. However, the difference in computation time is no more than $2\times$, as the number of samples increases. This might be because we use the CPU which does not admit parallel acceleration.}

\end{document}